\documentclass{article}

\usepackage[utf8]{inputenc} 
\usepackage[T1]{fontenc}    
\usepackage{amsmath}
\usepackage{amssymb}
\usepackage{amsthm}

\usepackage[preprint]{neurips_2026}

\usepackage{hyperref}       
\usepackage{url}            
\usepackage{booktabs}       
\usepackage{amsfonts}       
\usepackage{nicefrac}       
\usepackage{microtype}      
\usepackage{xcolor}         

\usepackage[linesnumbered,ruled]{algorithm2e}
\usepackage{amsopn}
\usepackage{bm}
\usepackage{bbm}
\usepackage{braket}
\usepackage{comment}
\usepackage{extarrows}
\usepackage{graphicx}
\usepackage{mathrsfs}  
\usepackage[mathscr]{euscript}
\usepackage{subcaption}
\usepackage[capitalise]{cleveref}
\usepackage{wrapfig}
\usepackage{adjustbox}
\usepackage{enumitem}

\makeatletter
\DeclareSymbolFont{extraup}{U}{zavm}{m}{n}
\DeclareMathSymbol{\newcheckmark}{\mathalpha}{extraup}{128}
\DeclareMathSymbol{\xmark}{\mathalpha}{extraup}{129}
\makeatother

\hypersetup{
    colorlinks,
    citecolor=blue,
    filecolor=black,
    linkcolor=blue,
    urlcolor=black
}

\definecolor{somecolor}{RGB}{204, 88, 3}

\def\h{\mathbf{h}}

\def\x{\mathbf{x}}

\def\W{\mathbf{W}}

\DeclareMathOperator{\id}{id}
\DeclareMathOperator{\tr}{tr}

\newtheorem*{theorem*}{Theorem}
\newtheorem{theorem}{Theorem}[section]
\newtheorem{proposition}[theorem]{Proposition}
\newtheorem{definition}[theorem]{Definition}
\newtheorem{lemma}[theorem]{Lemma}
\newtheorem{corollary}[theorem]{Corollary}

\newtheoremstyle{boldremark} 
{6pt}   
{6pt}   
{\normalfont} 
{}      
{\bfseries} 
{.}     
{.5em}  
{}      

\theoremstyle{boldremark}
\newtheorem{remark}{Remark}

\title{How Long Does Infinite Width Last?\\ Signal Propagation in Long-Range Linear Recurrences}

\author{%
  Mariia Seleznova\\
Ludwig-Maximilians-Universität München\\
Max Planck Institute of Quantum Optics
}

\begin{document}

\maketitle

\begin{abstract}
We study signal propagation in linear recurrent models at finite width. While existing signal propagation theory relies predominantly on the infinite-width limit, it remains unclear for how long that approximation remains accurate when recurrent depth $t$ grows jointly with width $n$. This question is especially relevant for modern recurrent sequence models, whose natural operating regime involves long input sequences, i.e., large $t$. We derive exact finite-width formulas for the hidden state signal energies in linear recurrences under complex Gaussian initialization. Using these formulas, we identify the joint depth--width scaling regimes that govern signal propagation: (i) a \emph{subcritical regime} $t=o(\sqrt n)$, in which the infinite-width approximation remains valid; (ii) a \emph{critical regime} $t\sim c\sqrt n$, in which non-negligible deviations from infinite-width predictions appear and a nontrivial joint scaling limit emerges; and (iii) a \emph{supercritical regime} $t\gg \sqrt n$, in which finite-width effects dominate. Thus, our results pinpoint the precise recurrent depth scale at which infinite-width theory breaks down in long-range linear recurrences. In turn, this shows when standard initialization schemes, such as Glorot, become unstable. More broadly, our results demonstrate that finite-width effects accumulate more rapidly with depth in recurrent models than in feedforward ones, leading to qualitatively different signal propagation behavior.
\end{abstract}

\section{Introduction}\label{section:intro}

Understanding how signals propagate through the layers of a neural network (NN) is a foundational problem in deep learning (DL) theory. When signals are repeatedly transformed by many layers of a deep network, their magnitudes may progressively grow or decay, eventually leading to numerical failure. This phenomenon is widely known as the \emph{vanishing/exploding signals problem}, and it poses a fundamental obstacle to training deep NNs~\citep{bengio1994learning,pascanu2013difficulty}. Ensuring stable signal propagation has therefore long been a central design principle in DL methods, motivating widely used initialization schemes~\citep{glorot2010understanding,he2015delving,schoenholz2016deep} and architectural choices~\citep{ba2016layer,he2016deep,hochreiter1997long}.

Despite the foundational nature and long history of this question, the theoretical understanding of signal propagation in DL models remains limited. Much of the existing literature relies on the \emph{infinite-width limit}~\citep{schoenholz2016deep,poole2016exponential,yang2019scaling,yang2021tensor}, where the network's width $n$ (i.e, the number of neurons per layer) tends to infinity. In this regime, mean-field-type approximations yield tractable descriptions of signal statistics and lead to elegant signal propagation theories for many DL architectures~\citep{yang2019scaling,yang2021tensor}, including recurrent models~\citep{chen2018dynamical,gilboa2019dynamical,alemohammad2021recurrent}. Realistic networks, however, operate at finite width. In this setting, even small deviations from the infinite-width limit can accumulate over many layers or recurrent steps, eventually rendering infinite-width predictions inaccurate in sufficiently deep architectures or long-range recurrences~\citep{roberts2022principles,hanin2018neural,hanin2019finite,seleznova2022neural,bar2025revisiting}. This leads to the central question of this paper:
\begin{center}
	\emph{How long does the infinite-width approximation remain accurate in recurrent models?}
\end{center}

For fully-connected feedforward networks at random initialization, the answer is relatively well understood. The deviations from infinite-width behavior become significant when depth scales proportionally with width, i.e., $L\sim cn$, where $L$ is the number of layers and $n$ is the layer width~\citep{hanin2018neural,hanin2019finite,seleznova2022neural,hanin2024random}. That is, for fully connected feedforward networks, the critical depth scale at which finite-width effects begin to qualitatively alter signal propagation is by now known.

By contrast, much less is known for recurrent networks. Although existing work suggests that finite-width corrections behave differently in recurrent and feedforward architectures~\citep{segadlo2022unified,grosvenor2022edge}, their precise effect for long-range recurrent signal propagation remains unclear. Intuitively, finite-width effects accumulate faster in recurrences, since the same weight matrix is applied repeatedly and its dominant spectral directions are therefore reinforced across recurrent steps $t$. Bar et al. (2025)~\citep{bar2025revisiting} take a first step toward formalizing this intuition in linear recurrent sequence models with Gaussian initialization, showing that accumulated finite-width effects cause signal explosion when $t\gg \sqrt n$. However, their argument gives only a crude lower bound on the signal size. Thus, even for linear recurrences at random initialization, the sharp depth scale at which infinite-width predictions break down has remained unknown.

This gap is especially striking from the perspective of modern long-range sequence modeling, where recurrence-based architectures---such as state-space models (SSMs)~\citep{gu2021efficiently,gu2020hippo,de2024griffin} and linear recurrent units~(LRUs)~\citep{orvieto2023resurrecting}---are actively explored as computationally efficient alternatives to transformers for long-context tasks. These models have recently achieved competitive empirical performance while retaining the computational benefits of linear recurrence. Since they are designed for long-range tasks, their natural operating regime is that of long input sequences, or equivalently large recurrent depth $t$. Yet this is precisely the regime in which signal propagation theory remains weak.

This paper helps close this gap by characterizing finite-width signal propagation in randomly-initialized linear recurrences and identifying the recurrent depth scaling regimes in which the infinite-width approximation remains accurate and those in which it fails. We first specify the considered setting in Section~\ref{section:setting}, and then summarize the main contributions in Section~\ref{section:contributions}.

\subsection{Problem Setting }\label{section:setting}

Our main object of interest is the {linear recurrent unit} (LRU), which has recently re-emerged as a building block for long-range sequence models~\citep{orvieto2023resurrecting}. On the way, our analysis also yields results for the simple linear recurrent network without sequential inputs. Thus, the considered models~are:
\begin{align}
\text{\textbf{Linear recurrent unit (LRU):}}
\qquad
\h_{\textup{LRU}}^{(t)}
&:= \W \h_{\textup{LRU}}^{(t-1)} + \x^{(t)}
 = \sum_{k=0}^{t} \W^k \x^{(t-k)}, \ t \geq 1,
\label{eq:lru}\\
\text{\textbf{Linear recurrent network (RNN):}}
\qquad
\h_{\textup{RNN}}^{(t)}
&:= \W \h_{\textup{RNN}}^{(t-1)}
 = \W^t \x^{(0)}, \ t\geq 1.
\label{eq:linear-rnn}
\end{align}
Here $\h^{(t)}_{\cdot}\in\mathbb{C}^n$ is the hidden state vector at step $t\in\mathbb{N}$, $(\x^{(0)}, \dots, \x^{(t)})$ is the input sequence, where $\x^{(i)}\in\mathbb{R}^n$ for all $0\leq i \leq t$, and $\W\in\mathbb{C}^{n\times n}$ is the weight matrix. We set $\h_{\cdot}^{(0)}=\x^{(0)}$ in both models. The weight matrix $\W$ is initialized with i.i.d.\ proper complex Gaussian entries of variance~$1/n$, namely
\begin{equation}\label{eq:glorot}
    \W_{ij}\sim \mathcal N_{\mathbb{C}}(0,1/n), \quad \text{i.i.d. across } i,j=1, \dots, n.
\end{equation}

The main results in this work are formulated for complex weights. This choice is natural for two reasons. First, complex-valued linear recurrences are used in modern recurrent sequence models~\cite{orvieto2023resurrecting,gu2023mamba}, since complex eigenvalues provide a convenient representation of oscillatory dynamics. Second, the complex Gaussian ensemble admits a particularly clean analysis, which allows us to derive sharp finite-width results for signal propagation. At the same time, our analysis also yields a sufficient condition under which linear recurrences with real Gaussian weights deviate from the infinite-width predictions.

Our main quantitative objects are the normalized expected squared norms of the hidden states $\h_{\mathrm{LRU}}^{(t)}$ and $\h_{\mathrm{RNN}}^{(k)}$, which we call \emph{signal energies}. They are defined as follows:
\begin{equation}\label{eq:main_quantities}
   \textbf{Signal energies}: \quad S_{n,t}
   := \frac{1}{n}\mathbb E\bigl[\|\h_{\mathrm{LRU}}^{(t)}\|_2^2\bigr],
   \qquad
   Q_{n,k}
   := \frac{1}{n}\mathbb E\bigl[\|\h_{\mathrm{RNN}}^{(k)}\|_2^2\bigr],
\end{equation}
where the expectation is taken jointly over the weights $\W$ and the inputs $(\x^{(0)}, \dots, \x^{(t)})$. The same quantities are studied in classical works on signal propagation at infinite width~\citep{poole2016exponential,schoenholz2016deep}. Notably, standard Gaussian initialization schemes---such as He~\citep{he2015delving} and Glorot~\citep{glorot2010understanding}---are designed to stabilize these signal energies across layers at infinite width, so that $Q_{\infty,k} \approx Q_{\infty, k+1}$. However, in this work we explicitly study the behavior of these quantities at \emph{finite width}.

\subsection{Contributions}\label{section:contributions}

In the setting described above, the main contributions of this work are as follows:
\begin{itemize}
    \item \textbf{Exact finite-width formulas for signal propagation.} We derive exact closed-form formulas for signal propagation in finite-width linear recurrences with complex Gaussian initialization. Specifically, we obtain expressions for signal energies $S_{n,t}$ and $Q_{n,k}$ (defined in \eqref{eq:main_quantities}) for finite $n,t$ and $k$. This result is given in Theorem~\ref{thm:main}. To the best of our knowledge, this is the first exact finite-width result for signal propagation in recurrent models, which provides a clear departure from the standard infinite-width signal propagation theory.
    
   \item \textbf{Sharp depth--width scaling laws.} Using the exact finite-width formulas, we identify the recurrent depth--width scaling regimes that govern signal propagation. In particular, we identify and characterize three regimes: (i) the \emph{subcritical regime} $k,t=o(\sqrt n)$, in which the infinite-width approximation remains accurate; (ii) the \emph{critical regime} $k,t\sim c\sqrt n$, in which a nontrivial joint depth--width limit emerges; and (iii) the \emph{supercritical regime} $k,t\gg \sqrt n$, in which finite-width effects dominate. The corresponding behavior of the signal energies $S_{n,t}$ and $Q_{n,k}$ is summarized in Figure~\ref{fig:scaling_laws}. Together, these results yield a precise joint scaling signal propagation theory for recurrent depth and width in linear recurrences.
 
    \item \textbf{Limitations of the infinite-width theory for long-range recurrent models.} Our results make explicit when the infinite-width description fails in long-range recurrent models. In particular, we identify the critical depth scale $t\sim c\sqrt n$, at which finite-width corrections accumulate and qualitatively alter signal propagation. This makes the limits of infinite-width theory precise in the long-sequence regime relevant to modern recurrent architectures.
\end{itemize}
In addition, we validate our theoretical results with numerical experiments, which confirm the predicted signal propagation behavior.

\begin{figure}[h]
\centering
\includegraphics[width=0.45\linewidth]{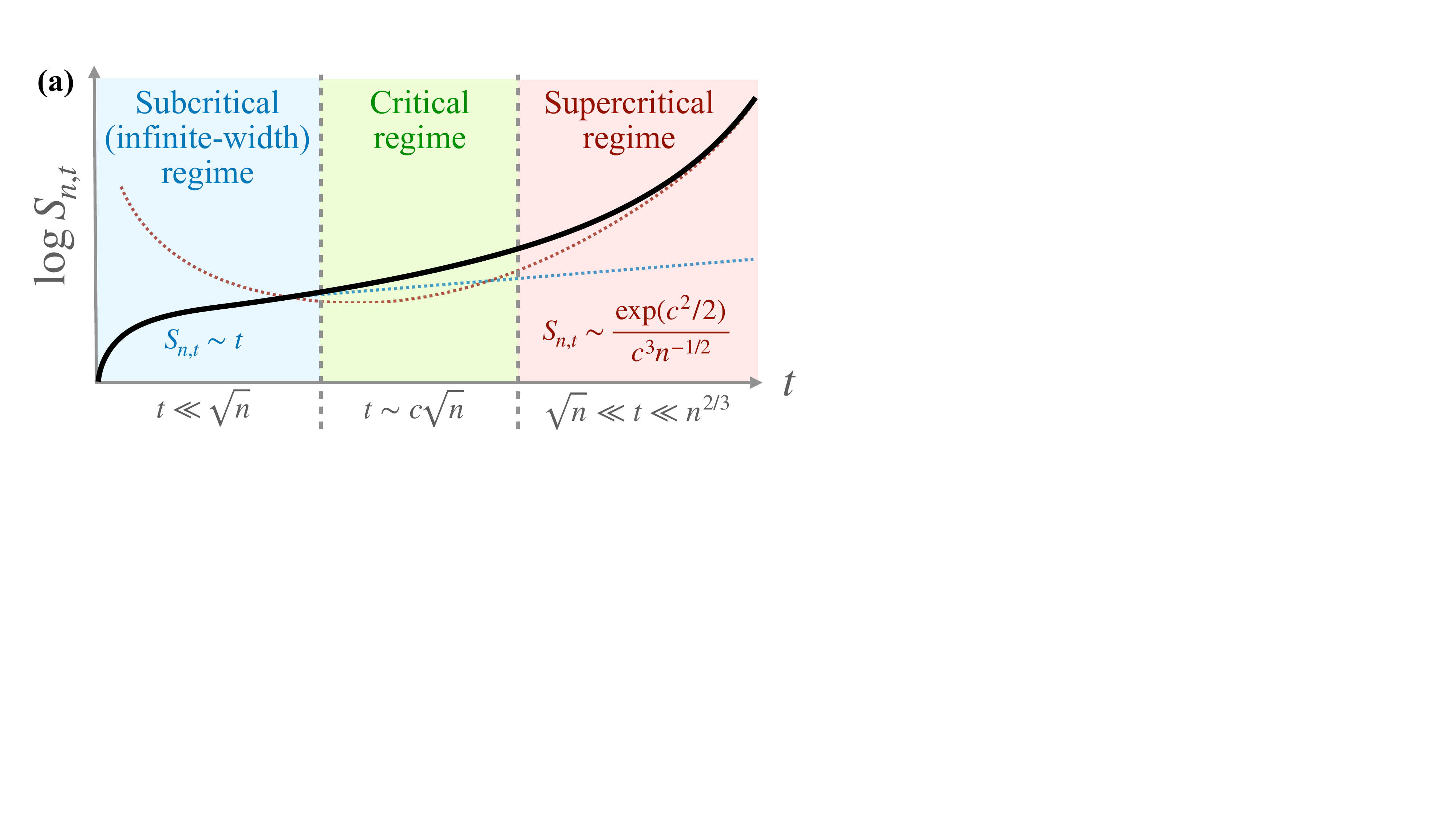}\hspace{6ex}
\includegraphics[width=0.45\linewidth]{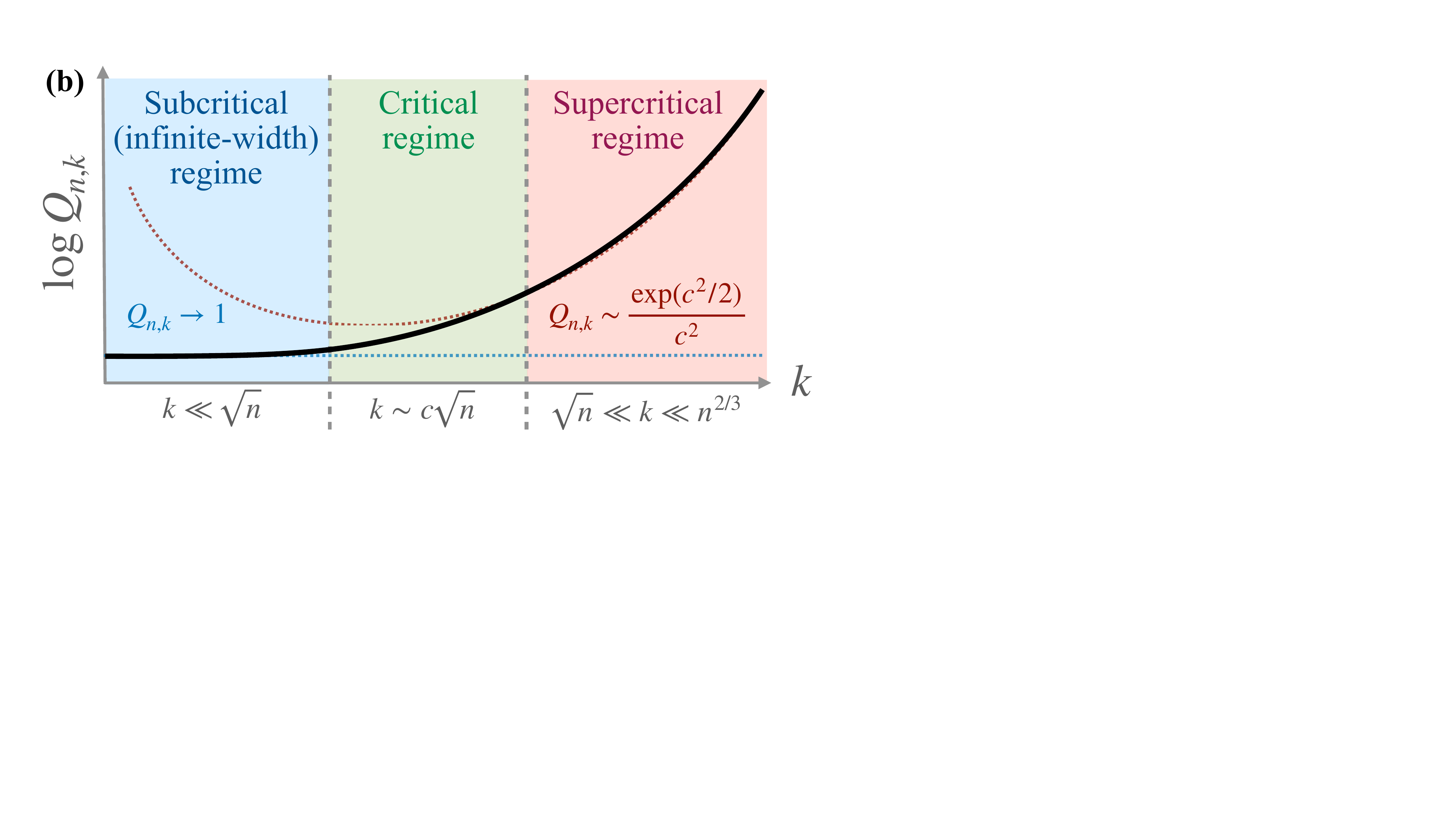}
\caption{\textbf{Depth--width scaling laws in linear recurrences.} The behavior of the signal energies under three joint scaling regimes of recurrent depth and width: (i) the \emph{subcritical regime} $k,t=o(\sqrt n)$, (ii) the \emph{critical regime} $k,t\sim c\sqrt n$, and (iii) the \emph{supercritical regime} subset $\sqrt n \ll k,t\ll n^{2/3}$. Panel~(a) shows the LRU signal energy $S_{n,t}$, and panel~(b) shows the RNN signal energy $Q_{n,k}$, as defined in~\eqref{eq:main_quantities}. Blue dotted lines indicate the continuation of the predicted infinite-width behavior beyond the subcritical regime, while red dotted lines indicate the continuation of the supercritical behavior beyond the displayed supercritical regime. Both quantities are shown on a logarithmic scale.}
\label{fig:scaling_laws}
\end{figure}

\paragraph{Outline.} Section~\ref{section:main} states the main finite-width signal propagation result. Section~\ref{section:scaling_regimes} analyzes the resulting depth--width scaling laws, and Section~\ref{section:numerics} presents numerical experiments. We conclude with a discussion in Section~\ref{section:discussion} and related work in Section~\ref{section:related_works}.

\section{Recurrent Signal Propagation at Finite Width}\label{section:main}

We now state the main theorem on finite-width signal propagation in linear recurrences. The proof of this result is given in Appendix~\ref{section:proof_main}.

\begin{theorem}[Signal propagation at finite width]\label{thm:main}
	Suppose the weight matrix $\W\in\mathbb{C}^{n\times n}$ has i.i.d.\ proper complex Gaussian entries with variance $1/n$, as in~\eqref{eq:glorot}. Let $(\x^{(0)},\dots,\x^{(t)})$ be an input sequence independent of $\W$, and assume that $\mathbb E[\x^{(s)}\x^{(s)\top}] = \mathbb I_n$ for all $0\le s\le t$.
	No independence assumptions across time are required. Then the following identities hold for the hidden state signal energies of the LRU and the linear RNN, defined in~\eqref{eq:main_quantities}:
	\begin{equation}
		S_{n,t} = \sum_{k=0}^{t} Q_{n,k},
		\qquad
		Q_{n,k} = \frac{F^+_{k+2}(n)-F^-_{k+2}(n)}{n^{k+1}(k+1)(k+2)}.
	\end{equation}
	Here $F_p^+(n)$ and $F_p^-(n)$ denote the rising and falling factorials, respectively:
	\begin{equation}
		F_p^+(n):=n(n+1)\cdots(n+p-1),
		\qquad
		F_p^-(n):=n(n-1)\cdots(n-p+1).
	\end{equation}
\end{theorem}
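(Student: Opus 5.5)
Our plan has two parts. First, we reduce both energies to traces of moments of $\W$. Second, we evaluate the resulting moment $\mathbb E\,\tr(\W^k\W^{*k})$ for the complex Ginibre ensemble.

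\textbf{Step 1 (reduction).} For the RNN, $\|\h^{(k)}_{\mathrm{RNN}}\|_2^2 = \x^{(0)\top}\W^{k*}\W^k\x^{(0)}$. Since $\x^{(0)}$ is independent of $\W$ and $\mathbb E[\x^{(0)}\x^{(0)\top}]=\mathbb I_n$, conditioning on $\W$ gives $Q_{n,k}=\frac1n\mathbb E\,\tr(\W^{k*}\W^k)$. For the LRU we expand
\[
\|\h^{(t)}_{\mathrm{LRU}}\|_2^2=\sum_{j,k}\x^{(t-j)\top}\W^{j*}\W^k\x^{(t-k)}.
\]
The diagonal terms $j=k$ give $\sum_k Q_{n,k}$. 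For the cross terms with $j\neq k$, $\mathbb E[\W^{j*}\W^k]$ is taken conditionally on the inputs, which is allowed by independence. The complex Ginibre law is invariant under $\W\mapsto e^{i\theta}\W$, and this rotation multiplies the matrix by $e^{i(k-j)\theta}$. Hence $\mathbb E[\W^{j*}\W^k]=0$ for $j\ne k$. This holds no matter how the inputs are correlated across time, which explains why no temporal independence assumption is needed. We conclude that $S_{n,t}=\sum_{k=0}^t Q_{n,k}$.

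\textbf{Step 2 (moment computation).} We must show that
\[
\mathbb E\,\tr(\W^k\W^{k*})=\frac{F^+_{k+2}(n)-F^-_{k+2}(n)}{n^{k}(k+1)(k+2)}.
\]
By rescaling, it suffices to treat a standard Ginibre matrix $G$ with variance-one entries, where we need
\[
\mathbb E\,\tr(G^kG^{k*})=\frac{F^+_{k+2}(n)-F^-_{k+2}(n)}{(k+1)(k+2)}.
\]
We would attack this by Wick's formula. For complex Gaussians, only pairings between a $G$ entry and a $\bar G$ entry contribute, so the sum runs over permutations $\sigma\in S_k$ that match the $k$ factors of $G^k$ with the $k$ factors of $G^{k*}$. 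Each pairing contributes $n^{\#\text{cycles}}$ of an associated permutation in $S_{2k}$ or $S_k$, built from $\sigma$ composed with the long cycle structure of the word $G^kG^{*k}$. The claimed formula is a polynomial in $n$ with odd or even parity. Writing
\[
\frac{F^+_{p}(n)-F^-_{p}(n)}{p(p-1)}=\sum_{\ell}\frac{1}{p(p-1)}\big(c(p,\ell)-(-1)^{p-\ell}c(p,\ell)\big)n^\ell,
\]
with $c(p,\ell)$ the unsigned Stirling numbers of the first kind, we see that the formula counts permutations of $S_{k+2}$ by their cycle number. This suggests a bijective or generating-function route.

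An alternative route, and the one we would try first, is a Schur-function or character expansion. For Ginibre matrices we have $\mathbb E[s_\lambda(G A)\overline{s_\mu(G B)}]\propto\delta_{\lambda\mu}$. Moreover, $\tr(G^kG^{*k})$ can be handled by expanding $\tr(G^k)$-type power sums via Murnaghan–Nakayama, since $p_k=\sum_{\text{hooks}}(-1)^{\text{leg}}s_{\text{hook}}$. Concretely, $\mathbb E|\tr G^k|^2$ is known to equal a sum over hook shapes of $s_\lambda(1^n)$-type content products. The quantity we need, $\tr(G^kG^{*k})$ without the absolute value, should similarly reduce to a sum over hooks $(k-a,1^a)$ with weight $\prod_{\text{cells}}(n+c(\square))$. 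Summing the content products over hooks of size $k+1$ telescopes to $\frac{F^+_{k+2}-F^-_{k+2}}{(k+1)(k+2)}$. This works because $\sum_{a}\prod(n+c)$ over hooks equals a difference of rising and falling factorials divided by a constant.

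Step 1 is routine. The main obstacle is Step 2, namely producing the exact hook-sum identity for $\mathbb E\,\tr(G^kG^{*k})$. If the character approach stalls, we would instead set up a recursion in $k$ using Gaussian integration by parts (Schwinger–Dyson equations) and verify that the stated closed form satisfies it, checking the base cases $k=0,1$ directly: for $k=0$ the value is $n$, and for $k=1$ the value is $n^2$.
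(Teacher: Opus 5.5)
\textbf{Step~1 is fine; the gap is Step~2.} Your reduction is correct. The phase rotation $\W\mapsto e^{i\theta}\W$ kills the cross terms more cleanly than the paper's Wick-counting argument (Lemma~\ref{lemma:S_trace}), and with whitened inputs you do not need the unitary-invariance step. Step~2 is the actual content of the theorem, and none of your three routes evaluates $\mathbb{E}\tr(G^kG^{k*})$.

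The Wick route stops where the work begins. The paper's proof does three things for each pairing $\sigma\in S_k$:
\begin{enumerate}
\item It identifies the free-index classes with the cycles of the commutator $\tau^{-1}\tilde\sigma^{-1}\tau\tilde\sigma\in S_{k+1}$, where $\tau=(0,1,\dots,k)$ and $\tilde\sigma$ is the extension of $\sigma$ fixing $0$.
\item It shows that $c=\tilde\sigma^{-1}\tau\tilde\sigma$ runs bijectively over all $(k+1)$-cycles as $\sigma$ runs over $S_k$.
\item It applies the known formula for the cycle distribution of $\tau^{-1}c$~\cite{stanley2011two,dubach2024number}.
\end{enumerate}
Your observation that the target counts odd permutations of $S_{k+2}$ by cycles does not replace that formula. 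The Schwinger--Dyson fallback also fails as described. Integrating by parts in $\mathbb{E}\tr(G^kG^{k*})$ produces products of traces of mixed words, so the equations do not close on this one-parameter family.

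\textbf{The character route, as stated, is wrong.} Schur orthogonality, $\mathbb{E}[s_\lambda(G)\overline{s_\mu(G)}]=\delta_{\lambda\mu}\prod_{\square\in\lambda}(n+c(\square))$, directly evaluates only expectations of $f(G)\overline{g(G)}$ with $f,g$ conjugation-invariant. For example, it gives $\mathbb{E}|\tr G^m|^2=\sum_{\lambda\vdash m\ \text{hook}}\prod_{\square\in\lambda}(n+c(\square))$. The quantity $\tr(G^kG^{k*})$ is not of this form, and neither of your guessed hook formulas matches it:
\begin{itemize}
\item Hooks of size $k$ give $\mathbb{E}|\tr G^k|^2$, which is $n$ rather than $n^2$ at $k=1$.
\item Summing $\prod(n+c(\square))$ over hooks of size $k+1$ gives $\sum_{a=0}^{k}F^+_{k+1}(n-a)$. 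Using $F^+_{m+1}(x)-F^+_{m+1}(x-1)=(m+1)F^+_m(x)$, this telescopes to $\bigl(F^+_{k+2}(n)-F^-_{k+2}(n)\bigr)/(k+2)$. That is $k+1$ times the target, e.g.\ $2n^2$ at $k=1$.
\end{itemize}

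\textbf{The missing bridge is one Gaussian integration by parts.} Write $\overline{\tr G^{k+1}}=\sum_{i,j}\overline{G_{ij}}\,\overline{(G^k)_{ji}}$ and use $\mathbb{E}[\overline{G_{ij}}\,F]=\mathbb{E}[\partial_{G_{ij}}F]$. The derivative $\partial_{G_{ij}}$ annihilates $\overline{(G^k)_{ji}}$, while $\partial_{G_{ij}}\tr G^{k+1}=(k+1)(G^k)_{ji}$. Together these give $\mathbb{E}|\tr G^{k+1}|^2=(k+1)\,\mathbb{E}\tr(G^kG^{k*})$.

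With this identity, the hook expansion of $p_{k+1}$, Schur orthogonality, and the telescoping sum, your character approach becomes a complete proof. It would be genuinely different from the paper's permutation-cycle argument, trading the Stanley--Dubach cycle formula for Schur orthogonality plus one integration by parts. Without the bridge, Step~2 is unproven.
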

This theorem gives exact closed-form expressions for the signal energies of linear recurrences at finite width. Its main value is that it enables a precise analysis of signal propagation under arbitrary joint scaling of recurrent depth and width, for example in regimes of the form $t,k \sim n^\alpha$. Section~\ref{section:scaling_regimes} develops the corresponding depth--width scaling laws, summarized in Figure~\ref{fig:scaling_laws}.

Below, we make several remarks on extensions and variants of this result.

\begin{remark}[General input covariances]\label{remark:general_covariances}
	Theorem~\ref{thm:main} is stated for whitened inputs for simplicity. However, the same argument extends immediately to arbitrary input covariances $\Sigma^{(0)},\dots,\Sigma^{(t)}$, where $\mathbb E[\x^{(s)}\x^{(s)\top}] = \Sigma^{(s)}$ for all $0\leq s\leq t$.
	In that case, the expected LRU signal energy is
	\begin{equation}
		S_{n,t}
		= \sum_{k=0}^t  \frac{\tr\!\bigl(\Sigma^{(t-k)}\bigr)}{n}\, \tilde Q_{n,k},
	\end{equation}
	where $\tilde Q_{n,k}$ denotes the RNN signal energy under whitened inputs, as given in Theorem~\ref{thm:main}. In fact, the proof in Appendix~\ref{section:proof_main}, and in particular Lemma~\ref{lemma:S_trace}, is already formulated in this general covariance setting. Since we impose no additional assumptions on cross-time correlations or on the distribution of the input sequence $(\x^{(0)},\dots,\x^{(t)})$, the result is fully general with respect to the inputs, apart from their independence from the recurrent weight matrix.
\end{remark}

\begin{remark}[Real weights case]\label{remark:real_case}
	While our main result is stated for complex weights, which are common in practical linear recurrences, it also directly yields a lower bound for the corresponding real Gaussian models. Let $Q_{n,k}^{\mathbb R}$ and $S_{n,t}^{\mathbb R}$ denote the analogues of $Q_{n,k}$ and $S_{n,t}$ for a real Gaussian weight matrix with i.i.d.\ entries of variance $1/n$. Then, with an additional assumption that cross-time input correlations satisfy $\tr(\mathbb{E}\![\x^{(s)}\x^{(r)\top}])\geq 0$ for all $0\leq s,r\leq t$, we have:
	\begin{equation}
		Q_{n,k}^{\mathbb R} \geq Q_{n,k},
		\qquad
		S_{n,t}^{\mathbb R} \geq S_{n,t},
	\end{equation}
	for all $n,k,t$. In particular, any depth--width scaling regime that leads to exploding signals in the complex model also leads to exploding signals in the real model. We prove this result in Appendix~\ref{section:real_case}.
\end{remark}

\subsection{Proof Idea}\label{section:proof_ides}
The proof combines standard random matrix techniques with a reduction to a cycle counting problem for certain random permutations. The starting point is the observation that the main difficulty in computing both signal energies, $Q_{n,k}$ and $S_{n,t}$, is the evaluation of the finite-width \emph{trace moment}
\begin{equation}
	\frac{1}{n}\,\mathbb E  \Bigl [ \tr\bigl((\W^k)^*\W^k\bigr)\Bigr].
\end{equation}
The reduction of $Q_{n,k}$ and $S_{n,t}$ to this quantity is carried out in Appendix~\ref{section:reduction_to_trace} using standard properties of Gaussian random matrices. Thus, the core of the proof is to evaluate this trace moment exactly in the given complex Gaussian setting.

To evaluate the trace moment, we use a Wick expansion argument: we first expand the trace into a sum of monomials of the form $	\overline{\W}_{a_0 a_1}\,\overline{\W}_{a_1 a_2}\cdots \overline{\W}_{a_{k-1} a_k}\,
\W_{a_0 b_1}\cdots \W_{b_{k-1} a_k}$, and then apply Wick's theorem for Gaussian moments~\cite{isserlis1918formula,wick1950evaluation} to compute their expectations. This is a standard way of converting random matrix moments into combinatorial counting problems, and we develop it in our setting in Appendix~\ref{section:trace_moment}. Related ideas have been used successfully in finite-width analyses of fully-connected feedforward networks~\cite{yaida2020non,roberts2022principles}. In the recurrent setting, however, the resulting combinatorics are substantially more intricate because the same random matrix is reused across time. The key simplification in our setting is that, for complex Gaussian recurrences, the Wick expansion takes a particularly clean form and can be indexed by permutations $\sigma\in S_k$. Each such permutation contributes according to the number of matrix indices that remain unconstrained after the corresponding contractions. Under the complex Gaussian initialization with variance $1/n$, this yields the exact representation
\begin{equation}
	\mathbb{E}\!\left[\frac{1}{n}\tr\bigl((\W^k)^*\W^k\bigr)\right]
	=
	\sum_{\sigma\in S_k} n^{-k -1 + F(\sigma)},
\end{equation}
where $F(\sigma)$ denotes the number of \emph{free indices}, i.e.\ the number of equivalence classes of indices $(a_0,\dots,a_k,b_1,\dots,b_{k-1})$ left unconstrained by the contractions associated with $\sigma$.

For a permutation $\sigma$, the quantity $F(\sigma)$ turns out to be exactly the number of cycles of the commutator $[\tau^{-1},\tilde\sigma^{-1}]$, where $\tau=(0,1,\dots,k)$ is a long cycle and $\tilde\sigma$ is the extension of $\sigma$ fixing $0$. This provides the link between the Wick expansion and permutation combinatorics. As we show in Appendix~\ref{section:cycle_count}, this identification reduces the problem to counting cycles of permutations of the form $\tau^{-1}c$, where $c$ is a uniformly random full cycle. This quantity can be computed exactly using known results, in particular those of Dubach (2024)~\cite{dubach2024number}. While the argument in~\cite{dubach2024number} and ours is formulated in a different language, it is closely related to the literature on Hultman numbers and, in particular, to the corresponding closed formulas~\cite{stanley2011two,alexeev2011hultman}.

Putting these ingredients together yields an exact closed-form expression for the trace moment, and therefore for $Q_{n,k}$ and $S_{n,t}$. In this way, the proof reduces the finite-width signal propagation problem to a finite combinatorial counting problem that can be solved exactly.

\section{Depth--Width Scaling Regimes for $Q_{n,k}$ and $S_{n,t}$}\label{section:scaling_regimes}
The exact finite-width formulas in Theorem~\ref{thm:main} allow us to analyze signal propagation under joint scaling of recurrent depth and width. In particular, they make it possible to characterize the behavior of the signal energies in regimes such as $k\sim n^\alpha$ and $t\sim n^\alpha$, and thus identify the depth scales at which the infinite-width approximation remains accurate and those at which it fails.

For this purpose, it is convenient to rewrite $Q_{n,k}$ as follows:
\begin{equation}\label{eq:Q_product_form}
	Q_{n,k}
	=
	\dfrac{n}{(k+1)(k+2)}
	\Biggl[
	\prod_{j=0}^{k+1}\Bigl(1+\dfrac{j}{n}\Bigr)
	-
	\prod_{j=0}^{k+1}\Bigl(1-\dfrac{j}{n}\Bigr)
	\Biggr],
\end{equation}
since this form makes the relevant asymptotic regimes more transparent. The corresponding analysis for $Q_{n,k}$ is given in Appendix~\ref{section:Q_scaling}, and for $S_{n,t}$ in Appendix~\ref{section:S_scaling}. We now describe the asymptotic behavior of $Q_{n,k}$ and $S_{n,t}$ across the three regimes illustrated in Figure~\ref{fig:scaling_laws}: the \emph{subcritical}, \emph{critical}, and \emph{supercritical} regimes.

\subsection{Subcritical (Infinite-Width) Regime: $k=o(\sqrt{n})$ and $t=o(\sqrt{n})$}

We start with the regime in which the recurrent depth remains asymptotically below the critical $\sqrt n$ scale. In this case, signal propagation follows the same behavior as in the infinite-width limit:
\begin{equation}\label{eq:subcritical_regime_q_s}
	Q_{n,k} \to 1,
	\qquad
	S_{n,t} \sim t+1.
\end{equation}
Here, and throughout this section, the notation $a\sim b$ means that $a/b\to 1$ in the corresponding joint scaling limit. Precise statements and proofs for this regime are given in Theorems~\ref{thm:subcritical_regime} and~\ref{thm:subcritical_regime_s} in the Appendix. Below, we make several observations about the implications of these results.

\paragraph{Stability of Glorot initialization.}
The asymptotics in~\eqref{eq:subcritical_regime_q_s} coincide exactly with the classical infinite-width limit $n\to\infty$ at fixed $k$ and $t$. In particular, $Q_{n,k}$ remains asymptotically equal to $1$, uniformly over recurrent depths satisfying $k\ll \sqrt n$. Thus, under standard Glorot initialization~\eqref{eq:glorot}, signals remain stable throughout the entire subcritical regime. In this sense, our result extends the classical stability of Glorot initialization at infinite width with fixed recurrent depth~\cite{glorot2010understanding,chen2018dynamical} to the joint depth--width scaling regimes with $k=o(\sqrt n)$.

\paragraph{Infinite-width theory does not distinguish recurrent and feedforward networks.}A second important observation is that, throughout the subcritical regime, recurrent and feedforward signal propagation remain asymptotically indistinguishable at the level of signal energies. More precisely, if one replaces the shared recurrent matrix $\W$ by independent matrices $\W^{(0)},\dots,\W^{(t)}$ across steps in the linear RNN~\eqref{eq:linear-rnn} or the LRU~\eqref{eq:lru}, then the asymptotic behavior of $Q_{n,k}$ and $S_{n,t}$ remains unchanged under the scaling $k,t=o(\sqrt n)$. This result also holds in existing infinite-width frameworks, such as Tensor Programs~\cite{yang2019scaling}, and this observation is made explicit for example in~\cite{alemohammad2021recurrent}.

While the second observation agrees with well-established literature, it is somewhat unsatisfactory from the perspective of recurrent modeling. In practice, recurrent networks are much more prone than feedforward ones to signal instability, and mathematically one expects repeated application of the same matrix to accumulate signal growth faster than independent layer-wise multiplication. The subcritical regime therefore identifies the range in which infinite-width theory remains faithful, but it also highlights its main limitation: below the critical scale, the theory cannot distinguish recurrence from feedforward composition. One of the main contributions of this paper is to show that this distinction becomes visible precisely at the critical scaling regime, which we analyze next.

\subsection{Critical Regime: $k \sim c\sqrt{n}$ and $t \sim \tilde c\sqrt{n}$}

The finite-width corrections become non-negligible when recurrent depth reaches the critical $\sqrt n$ scale. In this regime, the signal energy $Q_{n,k}$ no longer follows the infinite-width prediction and instead converges to a nontrivial critical profile, illustrated in Figure~\ref{fig:scaling_laws} and given by:
\begin{equation}\label{eq:q_critical}
	Q_{n,k}\to \frac{2}{c^2}\sinh\Bigl(\frac{c^2}{2}\Bigr), \quad \text{where } k \sim c\sqrt{n}, \quad c>0.
\end{equation}
Thus, unlike in the subcritical regime, the signal energy now depends nontrivially on the ratio $k/\sqrt n$. In particular, $Q_{n,k}$ grows strictly above its infinite-width value when $c>0$, showing systematic finite-width signal amplification over depth. This result is proven in Theorem~\ref{thm:critical_regime} in the Appendix. 

For the cumulative LRU signal energy $S_{n,t}$, this yields the following asymptotic law:
\begin{equation}\label{eq:s_critical}
	S_{n,t}
	\sim
	\sqrt{n}\int_0^{\tilde c}
	\dfrac{2}{x^2}\sinh \Bigl(\dfrac{x^2}{2}\Bigr)\,dx, \quad \text{where }  t\sim \tilde c\sqrt{n}, \quad \tilde c>0.
\end{equation}
where the integrand is continuously extended to take the value $1$ at $x=0$. This result is proven in Theorem~\ref{thm:critical_regime_s} in the Appendix. For large $\tilde c=t/\sqrt{n}$, this integral is dominated by its upper endpoint and behaves as $\tilde c^{-3}\exp(\tilde c^2/2)$. Thus, in the critical regime, signal growth becomes exponential in the scaling parameter $t/\sqrt n$. Below, we discuss the implications of the emergence of critical regime.

\paragraph{Instability of Glorot initialization.} The transition to the critical regime shows that standard Glorot initialization~\eqref{eq:glorot} becomes unstable once recurrent depth reaches the critical $\sqrt n$ scale. By Remark~\ref{remark:real_case}, this conclusion applies to both complex and real weights cases. This helps explain why Glorot initialization, although natural from the infinite-width perspective, typically becomes unstable in modern long-range linear recurrences~\cite{orvieto2023resurrecting,gu2021efficiently}. We note that Bar et al. (2025)~\cite{bar2025revisiting} have recently shown that the scaling $t\sim \tilde c\sqrt{n}$ already produces exponential growth in the LRU signal energy~$S_{n,t}$. However, that work established only a crude lower bound, which in particular missed the $\sqrt n$ prefactor appearing in~\eqref{eq:s_critical}. Our result sharpens this picture in two ways: it shows that the $\sqrt n$ scale is exactly the threshold at which the transition to critical behavior occurs, and identifies the full asymptotic profile in this regime. In this sense, we give a precise answer to when Glorot initialization ceases to be stable in long-range linear recurrences.

\paragraph{Recurrent versus feedforward signal propagation.} Starting at the critical regime, signal propagation in recurrent networks begins to differ sharply from that in feedforward networks---a distinction that is invisible in the infinite-width theory. While finite-width corrections accumulate and become non-negligible once depth reaches the $\sqrt n$ scale in recurrences, the relevant depth scale for fully-connected networks is $L\sim cn$~\cite{hanin2018neural,hanin2020products,seleznova2022neural}. Moreover, even at this proportional regime, the signal energies remain stable in feedforward networks, while finite-width effects are primarily reflected in their fluctuations, which has important consequences for convergence to the kernel regime~\cite{hanin2019finite,seleznova2022neural}. Thus, the effects of finite-width corrections and the depth scales at which they first become visible are fundamentally different in recurrent and feedforward architectures. This observation helps reconcile theory with practice: recurrent models are known to be substantially more prone to signal instability, but this distinction is not captured by standard infinite-width analyses.

\subsection{Supercritical sublinear regime: $\sqrt{n} \ll k \ll n$ and $\sqrt{n} \ll t \ll n$}

Beyond the critical depth scale, finite-width effects dominate, and both signal energies diverge in the joint $(n,k)$ limit. In particular, when $\sqrt{n} \ll k \ll n$ and $\sqrt{n} \ll t \ll n$, we have
\begin{equation}
	Q_{n,k}
	\sim
	\dfrac{n}{(k+1)(k+2)}e^{n\Psi(k/n)},
	\qquad
	S_{n,t}
	\sim
	\frac{n^2}{t^3}\exp\Bigl(n\Psi(t/n)\Bigr),
\end{equation}
where $\Psi(x):=(1+x)\log(1+x)-x$.  Thus, once recurrent depth exceeds the critical $\sqrt n$ scale, signal growth becomes exponential in depth and no finite joint scaling limit exists. These results are proven in Theorems~\ref{thm:supercritical_regime} and~\ref{thm:supercritical_regime_s} in the Appendix.

A particularly tractable subregime occurs when $k,t=o(n^{2/3})$.
In that case, the exponent can be expanded to the leading order, yielding the asymptotic profiles illustrated in Figure~\ref{fig:scaling_laws}:
\begin{equation}
	Q_{n,k}
	\sim
	\dfrac{n}{k^2}\exp\Bigl(\dfrac{k^2}{2n}\Bigr),
	\qquad
	S_{n,t}
	\sim
	\frac{n^2}{t^3}\exp\Bigl(\frac{t^2}{2n}\Bigr).
\end{equation}

In principle, the analysis can be extended further, beyond the proportional depth--width scaling. However, all such regimes exhibit rapid signal explosion, so we do not pursue them here.

\section{Numerical Experiments}\label{section:numerics}
In this section, we provide numerical experiments validating the theoretical predictions derived above. We compare Monte Carlo estimates of the signal energies $Q_{n,k}$ and $S_{n,t}$ with the corresponding theoretical profiles across a range of widths and recurrent depths. The experiments confirm the critical scaling laws in the complex Gaussian setting (see  Figure~\ref{fig:numerics_C}) and also illustrate the comparison with the real Gaussian case discussed in Remark~\ref{remark:real_case} (see  Figure~\ref{fig:numerics_R}).

\begin{figure}[h]
    \centering
    \includegraphics[width=0.445\linewidth]{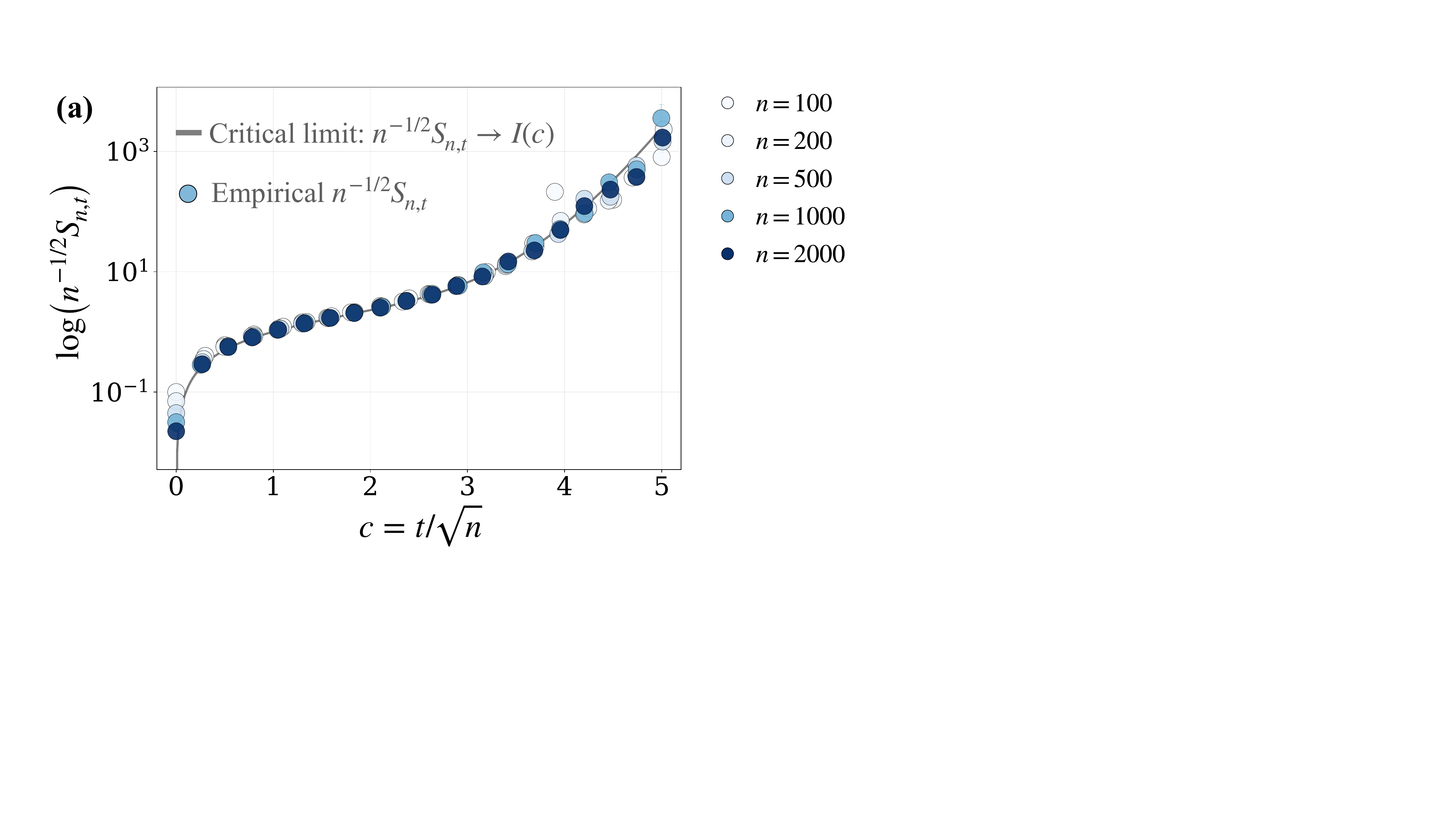}\hspace{4ex}
    \includegraphics[width=0.45\linewidth]{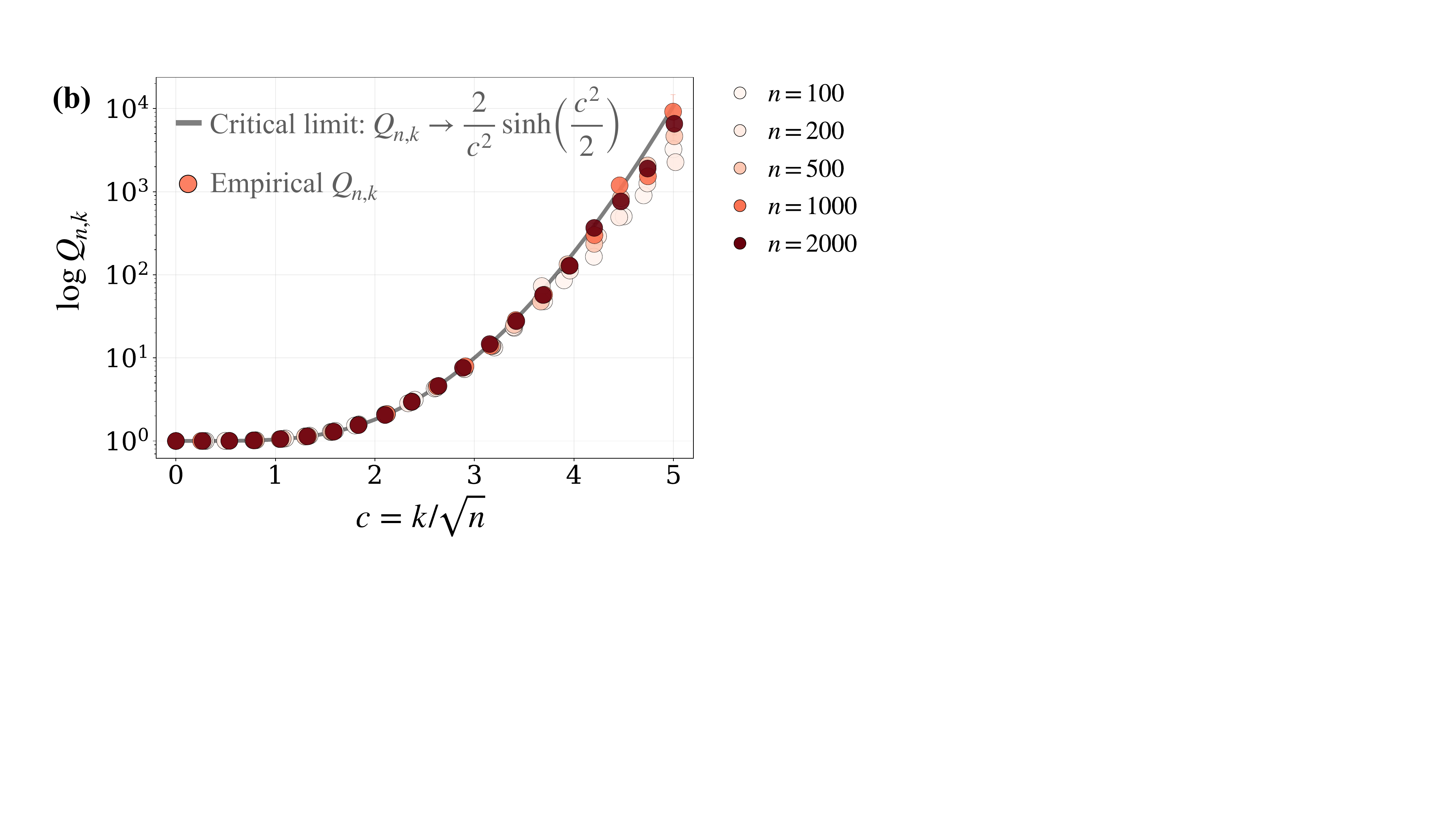}
   \caption{\textbf{Numerical validation of the critical scaling theory.} Panel~(a) shows Monte Carlo estimates of $S_{n,t}$, and panel~(b) shows Monte Carlo estimates of $Q_{n,k}$, each based on $10^5$ samples and plotted against the critical scaling variable $c$ for several widths $n$. In both panels, the numerical estimates are compared with the corresponding theoretical critical profiles, given by~\eqref{eq:s_critical} for $S_{n,t}$ and~\eqref{eq:q_critical} for $Q_{n,k}$. The close agreement illustrates convergence of the signal energies to their predicted critical limits. Error bars denote Monte Carlo standard errors of the mean. Since the signal energy distribution is heavy-tailed at large depth, the bars likely underestimate the variability.}
    \label{fig:numerics_C}
\end{figure}

\begin{figure}[t]
    \centering
    \includegraphics[width=0.445\linewidth]{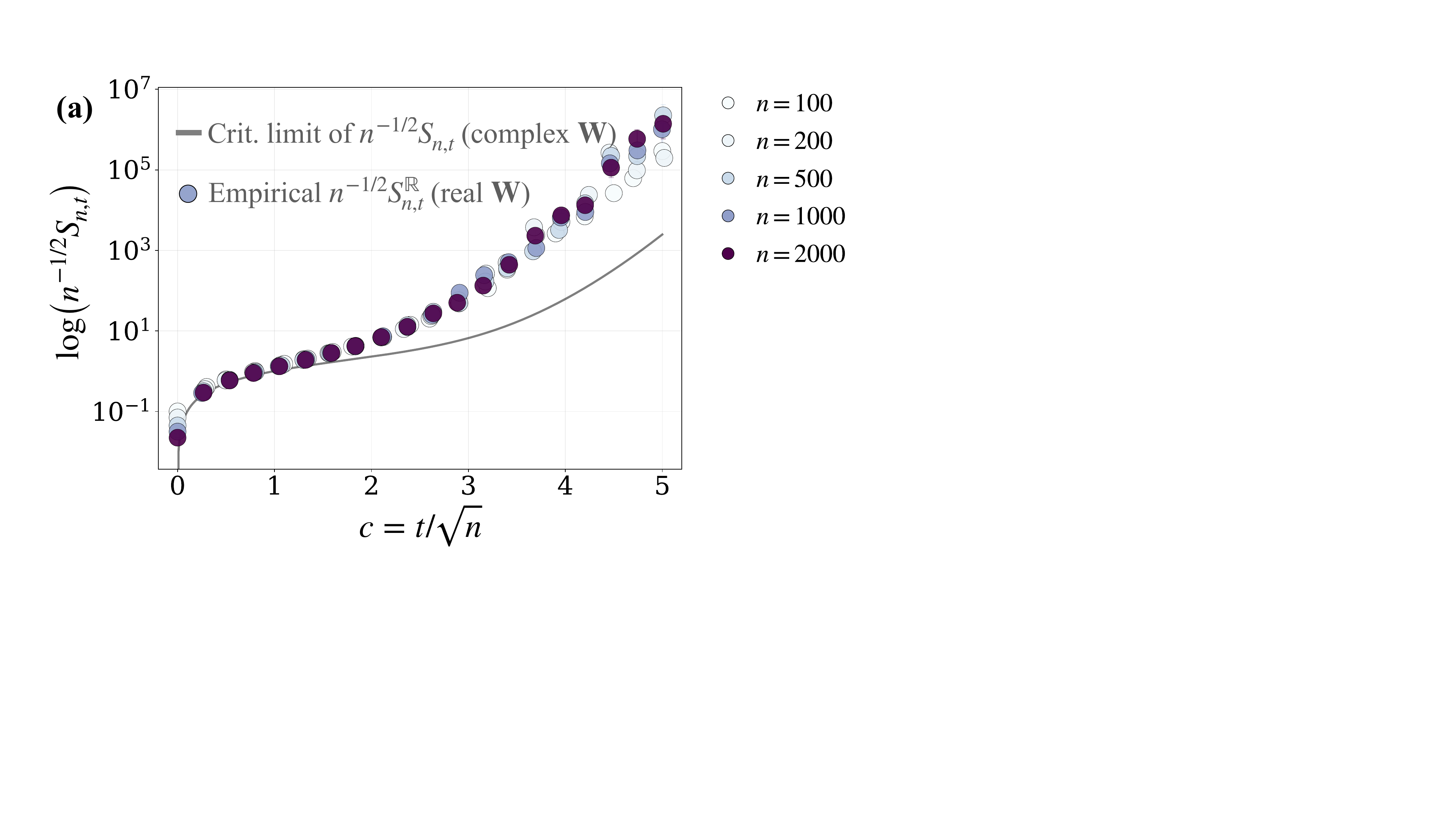}\hspace{4ex}
    \includegraphics[width=0.45\linewidth]{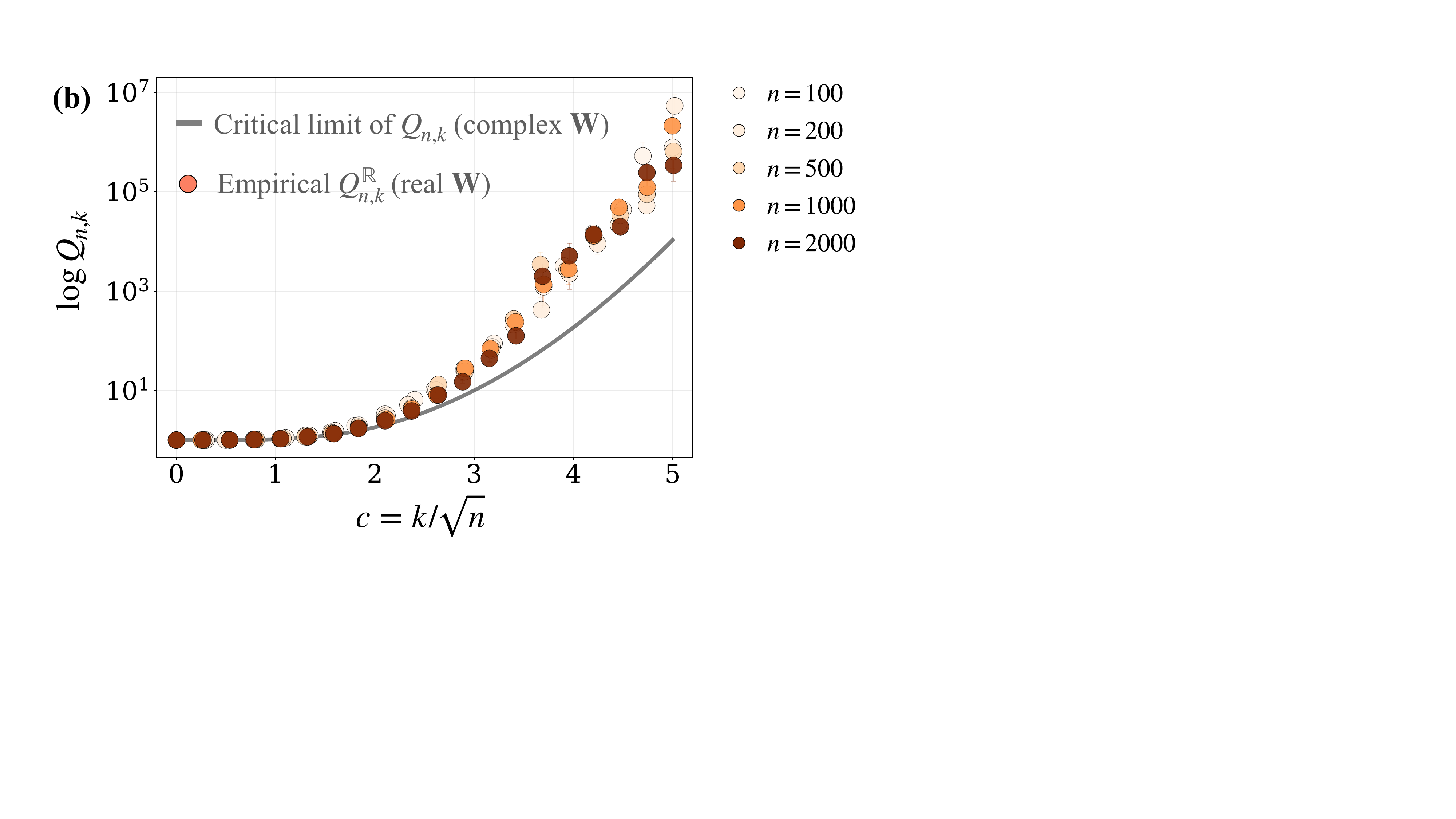}
    \caption{\textbf{Real Gaussian weights case.} As in Figure~\ref{fig:numerics_C}, panel~(a) shows Monte Carlo estimates of $S_{n,t}$ and panel~(b) shows Monte Carlo estimates of $Q_{n,k}$, now for real Gaussian weight matrices and plotted against the same critical scaling variable $c$. In both panels, the solid curve is the complex critical profile from~\eqref{eq:s_critical} and~\eqref{eq:q_critical}, shown here as a lower-bound reference. The figure illustrates that the real signal energies are consistently larger than their complex counterparts, in agreement with Remark~\ref{remark:real_case}. Error bars denote Monte Carlo standard errors of the mean.}
    \label{fig:numerics_R}
\end{figure}

\section{Discussion}\label{section:discussion}

We derived exact finite-width formulas for the RNN and LRU signal energies $Q_{n,k}$ and $S_{n,t}$ under complex Gaussian initialization, and used them to analyze joint recurrent-depth--width scaling in linear recurrences. This yields a sharp answer to the main question of the paper: the infinite-width approximation remains accurate throughout the subcritical regime $k,t=o(\sqrt n)$, but fails once recurrent depth reaches the critical $\sqrt n$ scale.

This transition has two main consequences. First, it shows that standard Glorot initialization, while stable in the classical infinite-width sense, ceases to be stable for long-range recurrent models from critical regime onward. Second, it reveals a qualitative difference between recurrent and feedforward architectures: in recurrent models, finite-width effects become visible already at depth $\sqrt n$, whereas in fully connected feedforward networks the corresponding depth scale is of order $n$. Thus, finite-width signal amplification appears much earlier in recurrent models.

Overall, this work contributes to the growing literature on finite-width corrections, joint scaling limits, and the limitations of classical infinite-width theory in modern deep learning. To the best of our knowledge, it is among the first to study these questions in the recurrent setting.

\subsection{Limitations and Future Work}\label{section:limitations}

While this work provides a full finite-width analysis of linear recurrences under complex Gaussian initialization with general input sequences, it does not yet extend to more general recurrent models.

First, although the analysis developed here is sufficient to show that the recurrent depth scale $\sqrt n$ leads to non-negligible deviations from infinite-width predictions in the real Gaussian weights case, obtaining a full counterpart of our main results for real weights is technically more involved and is therefore left for future work. More broadly, it would be interesting to extend the analysis to other random initialization schemes, including practical initialization strategies used in modern linear recurrent architectures such as LRUs~\cite{orvieto2023resurrecting} and SSMs~\cite{gu2020hippo,gu2022parameterization}, beyond the Gaussian baseline.

A second natural direction is to move beyond the linear setting. While many modern long-range sequence models, including LRUs and SSMs, are built around linear recurrent blocks, classical recurrent architectures typically incorporate nonlinear activations~\cite{gers2001lstm,cho2014learning}. Understanding finite-width signal propagation in such nonlinear recurrences remains an important open problem.

Another limitation of the present work is that it concerns only random initialization. Questions of signal propagation and stability during training are left entirely open. A natural next step would be to study when long-range linear recurrences at finite width remain close to their linearized kernel regime, which is known to describe the infinite-width limit~\cite{jacot2018neural,alemohammad2021recurrent}, and when recurrent models at large depth depart from that picture. Beyond this, it is also of interest to identify and characterize feature learning regimes under joint recurrent-depth--width scaling.

\section{Related Work}\label{section:related_works}
We review related work on signal propagation theory, both at infinite width and under finite-width corrections and joint depth–width limits, and the implications for stability in recurrent models.

\subsection{Signal Propagation at Infinite Width}

Existing signal propagation theory primarily concerns the infinite-width limit $n\to\infty$ at fixed depth $L$. In fully-connected networks, the behavior of signal energies in this regime was derived by Schoenholz et al. (2016)~\citep{schoenholz2016deep} and Poole et al. (2016)~\citep{poole2016exponential}. These analyses rely on the fact that preactivations in infinitely-wide networks become Gaussian, as shown rigorously e.g. in Matthews et al. (2018)~\cite{matthews2018gaussian}. The same perspective was subsequently extended to other architectures, including convolutional networks~\citep{xiao2018dynamical}, residual networks~\cite{tarnowski2019dynamical}, and recurrent networks~\cite{gilboa2019dynamical}. More broadly, infinite-width signal propagation theory has been formulated in a unified way for general deep learning architectures within the Tensor Programs framework~\cite{yang2019scaling,yang2020ntk,yang2020matrix,yang2021tensor}.

\paragraph{Recurrent models.} Several works explicitly analyzed signal propagation in vanilla RNNs and gated recurrent units in the infinite-width, fixed-depth setting~\citep{chen2018dynamical,gilboa2019dynamical,alemohammad2021recurrent}. Recurrent models also fall within the Tensor Programs framework. As mentioned above, a common conclusion of these works is that, at infinite width, signal propagation in recurrent models behaves as if the weights were independent across recurrent time steps. That is, at the level of infinite-width signal propagation, recurrence is indistinguishable from feedforward composition. This observation is made explicit, for example, in Theorem~3 of Alemohammad et al. (2021)~\cite{alemohammad2021recurrent} and also in Segadlo et al. (2022)~\cite{segadlo2022unified}.

\paragraph{Stable initialization schemes.}
One of the main practical consequences of signal propagation theory is the principled design of stable initialization schemes, which prevent signals from exploding or vanishing with depth. This leads, in particular, to the familiar variance-preserving Gaussian initializations such as He initialization~\cite{he2015delving} and Glorot initialization~\cite{glorot2010understanding}.

\subsection{Finite-Width Corrections and Joint Depth--Width Limits}
A growing line of work studies how finite-width networks deviate from their infinite-width limits. A central question in this literature is how such deviations accumulate with depth, which naturally leads to the study of joint depth--width scaling limits in deep learning models.

\paragraph{Feedforward networks.} The most extensively studied setting here is fully-connected feedforward networks. In this case, finite-width corrections become non-negligible when depth scales linearly with width, i.e., in the regime $L\sim cn$. First results in this direction were obtained by Hanin (2018)~\cite{hanin2018neural}, who analyzed how fluctuations of signal energies accumulate with depth. Follow-up works extended this picture to broader settings and studied implications for signal stability and convergence to the kernel regime~\cite{hanin2020products,hanin2019finite,hanin2024random,seleznova2022neural,yaida2020non}. A more general perspective on finite-width corrections was later developed in Roberts \& Yaida (2022)~\cite{roberts2022principles}. Overall, in fully-connected feedforward networks, finite-width corrections can be analyzed layer by layer and are therefore comparatively tractable.

\paragraph{Recurrent models.} By contrast, the repeated reuse of the same matrix across time steps in recurrent models leads to a substantially more intricate structure of finite-width corrections and of their accumulation with depth. To the best of our knowledge, the only prior work to explicitly identify the $\sqrt n$ scale in linear recurrences is Bar et al. (2025)~\cite{bar2025revisiting}, where the authors used finite-width corrections to the spectral radius of Gaussian matrices to show that deviations from infinite-width behavior accumulate on the $t\sim c\sqrt n$ scale. However, that argument yields only a crude lower bound on the signal energy and does not recover the sharp finite-width behavior derived in the present work. More broadly, several earlier works investigated fluctuations around the infinite-width limit in recurrent networks and observed that they are larger than in feedforward architectures~\cite{segadlo2022unified,grosvenor2022edge}. However, these works do not identify an explicit depth scale at which infinite-width predictions break down, nor do they provide a sharp characterization of the size of finite-width corrections.

\paragraph{Stability of Glorot initialization in recurrent sequence models.}
Because standard infinite-width signal propagation theory does not distinguish feedforward and recurrent models, it predicts that Glorot initialization~\cite{glorot2010understanding} should remain stable in long-range linear recurrences. However, in practice, recent works on recurrent sequence models have found that Glorot is unstable and have therefore introduced additional rescaling. For example, Orvieto et al. (2023)~\cite{orvieto2023resurrecting} used truncated Glorot, which effectively contracts the spectrum by roughly~$0.8$, while Gu et al. (2021)~\cite{gu2021efficiently} explicitly report iteratively reducing the Gaussian variance until the model no longer explodes. More recently, Bar et al. (2025)~\cite{bar2025revisiting} proposed a theoretically motivated approach to such rescaling. At the same time, state-of-the-art linear recurrent sequence models typically rely on structured non-Gaussian initializations, while rescaled variants of Glorot are used mainly as baselines~\citep{gu2020hippo,fuhungry,gu2023mamba,gu2021efficiently}.

\bibliographystyle{plain}
\bibliography{references}


\newpage
\appendix

\section{Appendix}

\section{Proof of Theorem~\ref{thm:main}}\label{section:proof_main}
The central part of the proof is the exact evaluation of the trace moment $\mathbb E \bigl [\tr\bigl((\W^k)^*\W^k\bigr) \bigr]$ for finite $n$ and $k$, given in the following Subsections~\ref{section:trace_moment} and \ref{section:cycle_count}. The remaining part of the proof, which reduces the signal energies $Q_{n,k}$ and $S_{n,t}$ to the trace moments is given in Subsection~\ref{section:reduction_to_trace}.

\subsection{Trace moment via Wick expansion and free-index counting}\label{section:trace_moment}

In this subsection, we carry out the standard reduction of the trace moment to a combinatorial counting problem. Namely, we expand the trace into monomials in the Gaussian entries of $\W$, apply Wick's theorem, and then count how many indices remain free after imposing the Kronecker constraints associated with each Wick pairing.

We begin by expanding the trace as follows:
\begin{align}
	\tr\bigl((\W^k)^* \W^k\bigr)
	&= \sum_{a_0,a_k=1}^n (\W^k)^*_{a_k a_0}(\W^k)_{a_0 a_k} \\
	&= \sum_{a_0,a_k=1}^n \overline{(\W^k)_{a_0 a_k}}\, (\W^k)_{a_0 a_k} \\
	&= \sum_{\substack{a_0,\dots,a_k=1\\ b_1,\dots,b_{k-1}=1}}^n
	\overline{\W_{a_0 a_1}\W_{a_1 a_2}\cdots \W_{a_{k-1} a_k}}\,
	\W_{a_0 b_1}\W_{b_1 b_2}\cdots \W_{b_{k-1} a_k} \\
	&= \sum_{\substack{a_0,\dots,a_k=1\\ b_1,\dots,b_{k-1}=1}}^n
	\overline{\W}_{a_0 a_1}\,\overline{\W}_{a_1 a_2}\cdots \overline{\W}_{a_{k-1} a_k}\,
	\W_{a_0 b_1}\W_{b_1 b_2}\cdots \W_{b_{k-1} a_k}.
\end{align}
Thus the trace moment reduces to expectations of monomials in the entries of $\W$, which are centered proper complex Gaussians with variance $1/n$ under the complex Glorot initialization~\eqref{eq:glorot}.

For such monomials, Wick's theorem takes a particularly simple form, specified in the next lemma.

\begin{lemma}[Wick theorem for proper complex Gaussians]\label{lemma:complex_wick}
	Let $(Z_1,\dots,Z_m)$ be a centered proper complex Gaussian vector, meaning that
	$\mathbb E[Z_i Z_j]=0$ for all $i,j\in\{1,\dots,m\}$.
	Let $I=(I_1,\dots,I_k)$ and $J=(J_1,\dots,J_r)$ be arbitrary index tuples with entries in $\{1,\dots,m\}$.
	Then
	\begin{equation}
		\mathbb E\!\left[\prod_{a=1}^k \overline{Z_{I_a}}\prod_{b=1}^r Z_{J_b}\right]
		=
		\delta_{k,r}\sum_{\sigma\in S_k}\prod_{a=1}^k
		\mathbb E\!\left[\overline{Z_{I_a}}\, Z_{J_{\sigma(a)}}\right],
	\end{equation}
	where $S_k$ denotes the set of permutations of $\{1,\dots,k\}$.
\end{lemma}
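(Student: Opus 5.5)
The plan is to reduce Lemma~\ref{lemma:complex_wick} to the classical real Isserlis--Wick theorem by working in the real coordinates of the proper complex Gaussian vector, and then to use properness to kill every pairing that does not match a conjugated factor with an unconjugated one.

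First, write the product $\prod_{a=1}^k \overline{Z_{I_a}}\prod_{b=1}^r Z_{J_b}$ as a product of $k+r$ factors. Each factor is a complex linear combination of the real and imaginary parts of the $Z_i$, and these form a centered real Gaussian vector. By multilinearity, the real Isserlis theorem extends to complex linear combinations of jointly Gaussian centered real variables. Hence the expectation equals the sum over all perfect matchings of the $k+r$ factors of the product of pairwise expectations, and it vanishes if $k+r$ is odd.

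Next, classify the pairs in a matching.
\begin{itemize}
\item A pair of two unconjugated factors contributes $\mathbb E[Z_{J_b}Z_{J_{b'}}]=0$ by properness.
\item A pair of two conjugated factors contributes $\mathbb E[\overline{Z_{I_a}}\,\overline{Z_{I_{a'}}}]=\overline{\mathbb E[Z_{I_a}Z_{I_{a'}}]}=0$.
\end{itemize}
So only matchings in which every pair consists of one conjugated and one unconjugated factor survive. Such a matching exists only if $k=r$, which produces the factor $\delta_{k,r}$. When $k=r$, these bipartite perfect matchings are in bijection with permutations $\sigma\in S_k$, with $\overline{Z_{I_a}}$ paired to $Z_{J_{\sigma(a)}}$. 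Each such matching contributes $\prod_a\mathbb E[\overline{Z_{I_a}}Z_{J_{\sigma(a)}}]$, which is exactly the claimed formula.

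The only step needing care is justifying the complex extension of Isserlis. I would handle it by expanding each factor $\overline{Z_{I_a}}=X_{I_a}-iY_{I_a}$ and $Z_{J_b}=X_{J_b}+iY_{J_b}$. Then I would apply the real theorem to each resulting real monomial and recollect terms; the pairing sums recombine because both sides are multilinear in the factors. Repeated indices among the $I_a$ or $J_b$ cause no issue, since Isserlis treats each factor as a separate slot even when the underlying variables coincide. Finally, the case $k=r=0$ gives $1$ trivially, consistent with the empty product convention.
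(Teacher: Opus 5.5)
Your proposal is correct and follows essentially the same route as the paper. Both arguments apply Wick's theorem to the doubled family of $k+r$ factors, use properness to discard all conjugated--conjugated and unconjugated--unconjugated pairs, and identify the surviving mixed matchings with permutations $\sigma\in S_k$. Your multilinearity argument in the real coordinates additionally justifies the complex form of Isserlis's theorem, a step the paper simply takes for granted.
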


\begin{proof}
	We apply the usual Wick theorem to the family obtained by listing together the conjugated and non-conjugated Gaussian variables. Define the \emph{doubled family}
	\begin{equation}
		X_1,\dots,X_{k+r}
		:=
		(\overline{Z_{I_1}},\dots,\overline{Z_{I_k}},Z_{J_1},\dots,Z_{J_r}).
	\end{equation}
	Applying Wick's theorem to this family yields
	\begin{equation}\label{eq:wick_full}
		\mathbb E\!\left[\prod_{a=1}^k \overline{Z_{I_a}}\prod_{b=1}^r Z_{J_b}\right]
		=
		\sum_{\pi\in\mathcal P_2(k+r)}
		\prod_{\{p,q\}\in\pi}\mathbb E[X_pX_q],
	\end{equation}
	where $\mathcal P_2(k+r)$ denotes the set of all pairings of $\{1,\dots,k+r\}$.
	
	Since $(Z_1,\dots,Z_m)$ is proper complex Gaussian, we have
	\begin{equation}
		\mathbb E[Z_p Z_q]=0,
		\qquad
		\mathbb E[\overline{Z_p}\,\overline{Z_q}]
		=
		\overline{\mathbb E[Z_p Z_q]}
		=0
	\end{equation}
	for all $p,q\in\{1,\dots,m\}$. Therefore every pairing containing a pair of the form
	$(\overline{Z_p},\overline{Z_q})$ or $(Z_p,Z_q)$ contributes zero to the right-hand side of~\eqref{eq:wick_full}. Hence only pairings consisting entirely of mixed pairs
	$(\overline{Z_{I_a}}, Z_{J_b})$ can survive.
	
	Such pairings exist only if $k=r$. When $k=r$, every surviving pairing is uniquely determined by a permutation $\sigma\in S_k$, namely by matching
	\begin{equation}
		\overline{Z_{I_a}}
		\quad\text{with}\quad
		Z_{J_{\sigma(a)}},
		\qquad a=1,\dots,k.
	\end{equation}
	Substituting these surviving pairings into~\eqref{eq:wick_full} gives
	\begin{equation}
		\mathbb E\!\left[\prod_{a=1}^k \overline{Z_{I_a}}\prod_{b=1}^r Z_{J_b}\right]
		=
		\delta_{k,r}\sum_{\sigma\in S_k}\prod_{a=1}^k
		\mathbb E\!\left[\overline{Z_{I_a}}\, Z_{J_{\sigma(a)}}\right],
	\end{equation}
	as claimed.
\end{proof}

Applying Lemma~\ref{lemma:complex_wick} to the monomials appearing in the trace expansion, we see that the expectation is indexed by permutations $\sigma\in S_k$, each of which prescribes one possible pairing between the $k$ conjugated factors and the $k$ non-conjugated factors. The contribution of a fixed permutation $\sigma$ is therefore
\begin{equation}
	\sum_{\substack{a_0,\dots,a_k=1\\ b_1,\dots,b_{k-1}=1}}^n
	\prod_{r=1}^k \mathbb{E}\!\left[\overline \W_{a_{r-1}a_r} \, \W_{b_{\sigma(r)-1}b_{\sigma(r)}}\right].
\end{equation}
Since the entries of $\W$ are i.i.d.\ proper complex Gaussians with variance $1/n$, each covariance equals
\[
\mathbb{E}\!\left[\overline \W_{ij} \W_{pq}\right]
=
\frac{1}{n}\,\delta_{i,p}\delta_{j,q}.
\]
Hence the contribution of $\sigma$ becomes
\begin{equation}
	\sum_{\substack{a_0,\dots,a_k=1\\ b_1,\dots,b_{k-1}=1}}^n
	\prod_{r=1}^k \frac{1}{n}\delta_{a_{r-1},\,b_{\sigma(r)-1}}\delta_{a_r,\,b_{\sigma(r)}}.
\end{equation}
Here we use the convention
\[
b_0:=a_0,
\qquad
b_k:=a_k,
\]
so that all indices appearing in the contractions are among
\[
a_0,\dots,a_k,b_1,\dots,b_{k-1}.
\]

The key combinatorial quantity is the number of indices that remain unconstrained after all the Kronecker contractions have been imposed, as defined below.

\begin{definition}[Free-index count and defect of a permutation]
	For $\sigma\in S_k$, we define $F(\sigma)$ to be the number of \textbf{free indices}, namely the number of equivalence classes of indices that remain unconstrained after all $\delta$-contractions associated with $\sigma$ have been imposed. The corresponding \textbf{defect} is defined by
	\begin{equation}
		r(\sigma):=k+1-F(\sigma).
	\end{equation}
\end{definition}

Since each free index contributes a factor of $n$, whereas each covariance contributes a factor of $n^{-1}$, the total contribution of $\sigma$ is exactly
\begin{equation}
	n^{-k+F(\sigma)} = n^{1-r(\sigma)}.
\end{equation}
After dividing by the outer factor $1/n$, we obtain the following representation of the normalized trace moment.

\begin{lemma}[Trace moment as a free-index sum]\label{lemma:trace_permutation_sum}
	For every $k\geq 0$,
	\begin{equation}
		\mathbb{E}\!\left[\frac{1}{n}\tr\bigl((\W^k)^*\W^k\bigr)\right]
		=
		\sum_{\sigma\in S_k} n^{-r(\sigma)},
	\end{equation}
	where $r(\sigma)=k+1-F(\sigma)$ and $F(\sigma)$ is the number of free indices left after the Wick contractions corresponding to $\sigma$.
\end{lemma}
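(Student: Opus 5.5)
The plan is to assemble the derivation already sketched in the text into a formal argument. First I take expectation of the expanded trace, i.e. of the monomial sum over indices $a_0,\dots,a_k,b_1,\dots,b_{k-1}$. Since the index range is finite, linearity of expectation lets me exchange expectation and summation. For each fixed index tuple, the monomial is a product of $k$ conjugated and $k$ non-conjugated entries of $\W$, and these entries form a centered proper complex Gaussian vector. So Lemma~\ref{lemma:complex_wick} applies with $r=k$ and expresses the expectation as a sum over $\sigma\in S_k$ of products of covariances $\mathbb E[\overline{\W}_{a_{r-1}a_r}\W_{b_{\sigma(r)-1}b_{\sigma(r)}}]=\frac1n\delta\,\delta$. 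I use the conventions $b_0=a_0$ and $b_k=a_k$. Exchanging the finite sums over indices and over $\sigma$ gives
\[
\mathbb E\bigl[\tr((\W^k)^*\W^k)\bigr]=\sum_{\sigma\in S_k} n^{-k}\,N(\sigma),
\]
where $N(\sigma)$ is the number of index tuples in $\{1,\dots,n\}^{2k}$ satisfying all $2k$ Kronecker constraints attached to $\sigma$.

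Second, I count $N(\sigma)$. The constraints are equalities between pairs of index variables. I form the graph on the $2k$ variables $a_0,\dots,a_k,b_1,\dots,b_{k-1}$, with an edge for each constraint $\delta_{a_{r-1},b_{\sigma(r)-1}}$ and $\delta_{a_r,b_{\sigma(r)}}$. A tuple satisfies all constraints exactly when it is constant on each connected component, and the values on different components can be chosen independently. Hence $N(\sigma)=n^{F(\sigma)}$, where $F(\sigma)$ is the number of connected components, i.e. equivalence classes. This matches the Definition of free indices. It gives a contribution $n^{-k+F(\sigma)}$, and dividing by $n$ yields $n^{-k-1+F(\sigma)}=n^{-r(\sigma)}$, as claimed.

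The only delicate point is making precise that ``free indices'' means equivalence classes under the equivalence relation generated by the constraints. In particular, repeated or redundant constraints cost nothing extra, and each class contributes exactly one factor $n$ regardless of its size. The graph-component formulation handles this cleanly.

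Finally, the edge case $k=0$: $S_0$ has one element, the empty permutation. The trace is $\tr(\mathbb I_n)=n$, so the normalized moment is $1=n^{0}$. This agrees with $r=0+1-F$ and $F=1$, since the single index $a_0=b_0=a_k$ forms one class. No step here is a real obstacle; the substance, computing $F(\sigma)$, is deferred to the subsequent cycle-counting analysis.
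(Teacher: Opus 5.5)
Your proposal is correct and follows the paper's own argument: you expand the trace, apply Lemma~\ref{lemma:complex_wick} to get a sum over $\sigma\in S_k$ of products of $\frac{1}{n}\delta\delta$ factors (with the conventions $b_0=a_0$, $b_k=a_k$), and note that each equivalence class of indices contributes one factor $n$, which gives $n^{-k+F(\sigma)}$ before the final division by $n$. Your connected-components description of $F(\sigma)$ and your separate check of the $k=0$ case only spell out what the paper leaves implicit.
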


Lemma~\ref{lemma:trace_permutation_sum} reduces the problem to understanding the free-index count $F(\sigma)$, or equivalently the defect $r(\sigma)$, associated with each permutation $\sigma\in S_k$. In the next subsection, we show that $F(\sigma)$ can be identified with the number of cycles of a certain commutator of permutations, which allows the trace moment to be evaluated exactly.

\subsection{Exact evaluation of the trace moment via commutator cycle counts}\label{section:cycle_count}

By Lemma~\ref{lemma:trace_permutation_sum}, the trace moment is determined by the free-index count $F(\sigma)$ associated with each permutation $\sigma\in S_k$. We now show that this quantity admits a simple permutation-theoretic interpretation: it is exactly the number of cycles of a certain commutator. This reduces the trace computation to a cycle-counting problem, which can then be evaluated using known results on random permutations.

\begin{lemma}\label{lemma:free_indices_via_commutator}
	For a permutation $\sigma\in S_k$, the free-index count $F(\sigma)$ is exactly the number of cycles of
	\begin{equation}
		[\tau^{-1},\tilde\sigma^{-1}]
		:=
		\tau^{-1}\tilde\sigma^{-1}\tau\tilde\sigma,
	\end{equation}
	where $\tau=(0,1,\dots,k)\in S_{k+1}$ and $\tilde\sigma\in S_{k+1}$ is the extension of $\sigma$ fixing $0$, i.e.
	\begin{equation}
		\tilde\sigma(0)=0,
		\qquad
		\tilde\sigma(i)=\sigma(i),
		\quad i=1,\dots,k.
	\end{equation}
\end{lemma}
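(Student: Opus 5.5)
The plan is to make the Kronecker constraints explicit as an equivalence relation on index positions and then recognize that relation as the orbit relation of a single permutation of $\{0,1,\dots,k\}$. With the conventions $b_0=a_0$ and $b_k=a_k$, the Wick pairing associated with $\sigma$ imposes, for each $r=1,\dots,k$, the two identifications
\[
a_{r-1}=b_{\sigma(r)-1},\qquad a_r=b_{\sigma(r)} .
\]
$F(\sigma)$ is the number of connected components of the graph whose vertices are the labels $a_0,\dots,a_k,b_1,\dots,b_{k-1}$ and whose edges are these identifications.

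First I will eliminate the $b$-variables. Each $b_j$ with $1\le j\le k-1$ appears in exactly two constraints. It appears as a right endpoint through $r=\sigma^{-1}(j)$, giving $b_j=a_{\sigma^{-1}(j)}$. It appears as a left endpoint through $r=\sigma^{-1}(j+1)$, giving $b_j=a_{\sigma^{-1}(j+1)-1}$. Hence every $b_j$ lies in the class of some $a$-index, so the number of classes equals the number of classes of $\{a_0,\dots,a_k\}$ under the induced relation
\[
a_{\sigma^{-1}(j)}\sim a_{\sigma^{-1}(j+1)-1}.
\]
The boundary labels $b_0=a_0$ and $b_k=a_k$ need separate treatment. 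The constraint with $\sigma(r)=1$ gives $a_0\sim a_{\sigma^{-1}(1)-1}$. The constraint with $\sigma(r)=k$ gives $a_{\sigma^{-1}(k)}\sim a_k$.

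Next I will encode the relation by a permutation of $\{0,\dots,k\}$, extending $\sigma$ to $\tilde\sigma$ with $\tilde\sigma(0)=0$ and using $\tau(i)=i+1 \pmod{k+1}$. Reading composition right to left, the map $\pi:=\tau^{-1}\tilde\sigma^{-1}\tau\tilde\sigma$ acts by
\[
i\;\mapsto\; j=\tilde\sigma(i)\;\mapsto\; j+1\;\mapsto\;\tilde\sigma^{-1}(j+1)\;\mapsto\;\tilde\sigma^{-1}(j+1)-1 .
\]
For $1\le j\le k-1$ this is exactly the relation $a_{\sigma^{-1}(j)}\sim a_{\sigma^{-1}(j+1)-1}$. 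The modular wrap-around reproduces the two boundary identifications:
\begin{itemize}
\item for $j=k$, we get $j+1\equiv 0$, so $\tilde\sigma^{-1}(0)=0$ and $\tau^{-1}(0)=k$, giving $a_{\sigma^{-1}(k)}\sim a_k$;
\item for $i=0$, we have $j=0$, giving $a_0\sim a_{\sigma^{-1}(1)-1}$.
\end{itemize}
Therefore the identifications are precisely $i\sim\pi(i)$ for $i=0,\dots,k$. Since $\pi$ is a bijection, the equivalence classes it generates are its cycles, which gives $F(\sigma)=\#\mathrm{cycles}([\tau^{-1},\tilde\sigma^{-1}])$.

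The main obstacle is bookkeeping rather than ideas. I must check that each $b_j$ is counted in exactly the two stated constraints, so that no extra identifications among $a$'s are created or missed. I must also check that the boundary cases $j=0$ and $j=k$ match the modular convention for $\tau$ and the fixed point $\tilde\sigma(0)=0$. Finally, the composition convention has to agree with the paper's notation $\tau^{-1}\tilde\sigma^{-1}\tau\tilde\sigma$. Even if it did not, the opposite order yields a conjugate permutation (an inverse or conjugate of a commutator), which has the same cycle count, so the statement is robust to this choice.
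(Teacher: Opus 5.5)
Your proposal is correct and follows essentially the same route as the paper: you eliminate the $b$-indices to obtain the relations $a_{\sigma^{-1}(j)}\sim a_{\sigma^{-1}(j+1)-1}$ plus the two boundary identifications, and then recognize the whole set as $m\sim[\tau^{-1},\tilde\sigma^{-1}](m)$ for $m=0,\dots,k$, so the classes are exactly the cycles. You also add two small justifications the paper leaves implicit: each $b_j$ lies in exactly two constraints, so eliminating it preserves the number of classes, and the cycle count does not depend on the composition convention.
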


\begin{proof}
	Recall from the previous subsection that a permutation $\sigma\in S_k$ imposes the Kronecker constraints
	\begin{equation}
		a_{i-1}=b_{\sigma(i)-1},
		\qquad
		a_i=b_{\sigma(i)},
		\qquad i=1,\dots,k,
	\end{equation}
	together with the boundary conditions
	\[
	b_0=a_0,
	\qquad
	b_k=a_k.
	\]
	The free-index count $F(\sigma)$ is the number of equivalence classes generated by these identifications.
	
	To rewrite the constraints more symmetrically, set $j=\sigma(i)$. Then
	\begin{equation}
		a_{\sigma^{-1}(j)-1}=b_{j-1},
		\qquad
		a_{\sigma^{-1}(j)}=b_j,
		\qquad j=1,\dots,k.
	\end{equation}
	Eliminating the $b$-variables yields
	\begin{equation}
		a_{\sigma^{-1}(j-1)}=a_{\sigma^{-1}(j)-1},
		\qquad j=2,\dots,k,
	\end{equation}
	while the two boundary conditions become
	\[
	a_0=a_{\sigma^{-1}(1)-1},
	\qquad
	a_{\sigma^{-1}(k)}=a_k.
	\]
	
	These relations can be written uniformly by extending $\sigma$ to $\tilde\sigma\in S_{k+1}$ and introducing the full cycle $\tau=(0,1,\dots,k)$. Then for every $j\in\{0,\dots,k\}$,
	\begin{equation}
		a_{\tilde\sigma^{-1}(j)}
		=
		a_{\tau^{-1}\tilde\sigma^{-1}\tau(j)}.
	\end{equation}
	Indeed:
	\begin{itemize}
		\item for $j=0$, this reads $a_0=a_{\sigma^{-1}(1)-1}$;
		\item for $j=1,\dots,k-1$, this reads $a_{\sigma^{-1}(j)}=a_{\sigma^{-1}(j+1)-1}$;
		\item for $j=k$, this reads $a_{\sigma^{-1}(k)}=a_k$.
	\end{itemize}
	
	Now let
	\[
	m=\tilde\sigma^{-1}(j).
	\]
	Since $j=\tilde\sigma(m)$, the previous identity becomes
	\begin{equation}
		a_m
		=
		a_{\tau^{-1}\tilde\sigma^{-1}\tau\tilde\sigma(m)}
		=
		a_{[\tau^{-1},\tilde\sigma^{-1}](m)},
		\qquad m=0,\dots,k.
	\end{equation}
	Hence the equivalence classes among the indices $\{a_0,\dots,a_k\}$ are exactly the orbits of the permutation $[\tau^{-1},\tilde\sigma^{-1}]$. Therefore the number of free indices is precisely the number of cycles of this commutator.
\end{proof}

Lemma~\ref{lemma:free_indices_via_commutator} reduces the trace problem to counting cycles of the commutator $[\tau^{-1},\tilde\sigma^{-1}]$. The next step is to relate this commutator to a simpler random permutation model that has already been analyzed in the literature. Namely, we use the following known cycle-count formula, which is a reformulation of Theorem~2 in~\cite{dubach2024number}.

\begin{lemma}\label{lemma:cycles_count}
	Let $\tau=(0,\dots,M)\in S_{M+1}$, with $M\geq 0$, and let $c\in S_{M+1}$ be uniformly distributed among all $(M+1)$-cycles. Then the number of cycles of $\tau^{-1}c$, denoted $\mathcal{C}(\tau^{-1}c)$, satisfies
	\begin{equation}
		\mathbb{E}[t^{\mathcal{C}(\tau^{-1}c)}]
		=
		\dfrac{1}{(M+2)!}\bigl(F^+_{M+2}(t) - F^{-}_{M+2}(t)\bigr),
	\end{equation}
	where $F^{\pm}_{n}(t)$ are the rising and falling factorials, defined by
	\begin{equation}
		F^+_n(t) := t (t+1)\dots (t+n-1),
		\qquad
		F^-_n(t) := t (t-1)\dots (t-n+1).
	\end{equation}
\end{lemma}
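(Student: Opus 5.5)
We would not prove this from scratch. We would identify it with a classical enumeration result and verify it through the character-theoretic route, which is self-contained. Write $N:=M+1$ for the number of points, so $c$ ranges over the $(N-1)!$ cycles of length $N$ in $S_N$. Multiplying through by $(N-1)!$, the claim is equivalent to the generating-function identity
\begin{equation}
	\sum_{c\ \text{an $N$-cycle}} t^{\mathcal C(\tau^{-1}c)} = \frac{F^+_{N+1}(t)-F^-_{N+1}(t)}{N(N+1)}.
\end{equation}
This is the form of Stanley's theorem on products of an $N$-cycle with a fixed $N$-cycle (equivalently Zagier's result, and the Hultman-number formula of Dubach). As a first sanity check we take $N=1$: the left side is $t$, and the right side is $((t^2+t)-(t^2-t))/2=t$. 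Since $\tau^{-1}$ is itself an $N$-cycle and the distribution of $c$ is conjugation invariant, the left side depends only on the class of $\tau^{-1}$. We may therefore replace $\tau^{-1}$ by $\tau$.

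\textbf{Step 1 (Frobenius formula).} For each conjugacy class $C_\mu$, the number of $N$-cycles $c$ with $\tau c\in C_\mu$ is
\begin{equation}
	\frac{|C_{(N)}|\,|C_\mu|}{N!}\sum_{\lambda\vdash N}\frac{\chi^\lambda((N))^2\,\chi^\lambda(\mu)}{\chi^\lambda(1)}.
\end{equation}
By Murnaghan--Nakayama, $\chi^\lambda((N))$ vanishes unless $\lambda=(N-b,1^b)$ is a hook, in which case it equals $(-1)^b$. Its square is $1$, so only the $N$ hooks survive.

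\textbf{Step 2 (sum over classes).} We weight by $t^{\ell(\mu)}$ and sum over $\mu$. The inner sum becomes $\frac{1}{N!}\sum_{w\in S_N}\chi^\lambda(w)\,t^{\mathcal C(w)}$, which is the principal specialization $s_\lambda(1^t)$. By the hook-content formula,
\begin{equation}
	s_\lambda(1^t)=\prod_{x\in\lambda}\frac{t+c(x)}{h(x)}.
\end{equation}
This is valid as a polynomial identity in $t$, since both sides agree for all positive integers $t$. For the hook $\lambda=(N-b,1^b)$ the contents are $-b,\dots,N-1-b$, so the numerator is $\prod_{j=-b}^{N-1-b}(t+j)$. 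The hook-length product is $N\cdot b!\,(N-1-b)!$, and the dimension is $\chi^\lambda(1)=\binom{N-1}{b}$. Assembling the constants with $|C_{(N)}|=(N-1)!$, the left side becomes an explicit alternating-free sum over $b=0,\dots,N-1$. Each term is $\frac{1}{N}\binom{N-1}{b}^{-1}\cdot\frac{(N-1)!}{b!(N-1-b)!}$ times a product of $N$ consecutive shifted factors. The binomials cancel, leaving
\begin{equation}
	\frac{1}{N}\sum_{b=0}^{N-1}\ \prod_{j=-b}^{N-1-b}(t+j),
\end{equation}
up to a constant that we will recheck carefully.

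\textbf{Step 3 (telescoping).} We will close the computation with the telescoping identity
\begin{equation}
	(N+1)\prod_{j=-b}^{N-1-b}(t+j)=\prod_{j=-b}^{N-b}(t+j)-\prod_{j=-b-1}^{N-1-b}(t+j).
\end{equation}
This holds because the difference of the right-hand factors is $(t+N-b)-(t-b-1)=N+1$. Summing over $b=0,\dots,N-1$ leaves only the two boundary terms. For $b=0$ the surviving product is $\prod_{j=0}^{N}(t+j)=F^+_{N+1}(t)$. For $b=N-1$ it is $\prod_{j=-N}^{0}(t+j)=F^-_{N+1}(t)$. This gives exactly $(F^+_{N+1}-F^-_{N+1})/(N(N+1))$. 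Dividing by $(N-1)!$ yields the stated expectation with $(M+2)!=(N+1)!$.

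\textbf{Main obstacle.} We expect the only delicate part to be the bookkeeping of normalizations in Steps 1--2. This covers the factors $|C_{(N)}|/N!$, the dimension $\chi^\lambda(1)$, and the hook lengths, which must combine into the clean weight $1/N$ per hook. It also covers the direction convention $\tau^{-1}c$ versus $c\tau^{-1}$, which we handle by conjugation invariance. We would verify the constant on $N=2$: both transpositions $c$ give $\tau^{-1}c=\mathrm{id}$, so the expectation is $t^2$. The formula gives $\bigl(t(t+1)(t+2)-t(t-1)(t-2)\bigr)/6=t^2$, which confirms it.
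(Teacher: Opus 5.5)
Your proof is correct, and it takes a genuinely different route from the paper. The paper does not derive the formula. It imports it from Theorem~2 of Dubach, which is stated for the commutator $[\rho,\tau]=\rho\tau\rho^{-1}\tau^{-1}$ with $\rho$ uniform on $S_{M+1}$. Its only argument is the reduction $[\rho,\tau]=(\rho\tau\rho^{-1})\tau^{-1}$, where $\rho\tau\rho^{-1}$ is uniform over $(M+1)$-cycles, combined with the fact that $c\tau^{-1}$ and $\tau^{-1}c$ are conjugate.

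You instead prove the generating function directly, essentially along Zagier's and Stanley's lines. The ingredients are the Frobenius class-multiplication formula, Murnaghan--Nakayama to discard all non-hook $\lambda$, the identity $\frac{1}{N!}\sum_w\chi^\lambda(w)t^{\mathcal C(w)}=s_\lambda(1^t)$, the hook-content formula, and a telescoping sum. The bookkeeping checks out:
\begin{itemize}
\item The per-hook weight is exactly $(N-1)!\,\binom{N-1}{b}^{-1}\frac{1}{N\,b!\,(N-1-b)!}=\frac1N$, so there is no residual constant to recheck.
\item With $A_b:=\prod_{j=-b}^{N-b}(t+j)$, your telescoping identity reads $(N+1)\prod_{j=-b}^{N-1-b}(t+j)=A_b-A_{b+1}$.
\item Summing over $b$ gives $A_0-A_N=F^+_{N+1}(t)-F^-_{N+1}(t)$, as claimed.
\end{itemize}

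The paper's route is shorter but keeps the combinatorics as a black box. Yours is self-contained modulo standard symmetric-function facts. It also explains the shape of the answer: the two factorials are the boundary terms of the telescoping. The vanishing of all coefficients with $N-\mathcal C$ odd, forced by parity of a product of two $N$-cycles, is visible in $F^+-F^-$.

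Two minor remarks. First, replacing $\tau^{-1}$ by $\tau$ is unnecessary, since the Frobenius count depends only on the conjugacy class of the fixed element, and $\tau^{-1}$ is already in $C_{(N)}$. Second, in your $N=2$ sanity check, $S_2$ contains only one transposition, not two; the conclusion that the expectation is $t^2$ is unaffected.
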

We note that Theorem~2 in~\cite{dubach2024number} is stated in terms of the number of cycles of the commutator
\[
[\rho,\tau]:=\rho\tau\rho^{-1}\tau^{-1},
\]
where $\rho$ is uniformly distributed over $S_{M+1}$, rather than in terms of $\tau^{-1}c$ as in the formulation above. However, as also observed in~\cite{dubach2024number}, the two formulations are equivalent. Indeed, one can write
\begin{equation}
	[\rho,\tau] = \tau_1\tau_2,
	\qquad
	\tau_1 := \rho\tau\rho^{-1},
	\qquad
	\tau_2 := \tau^{-1}.
\end{equation}
Since $\tau_1$ is conjugate to $\tau$, it is uniformly distributed over the conjugacy class of $\tau$, that is, over all permutations with the same cycle structure as $\tau$. In our setting, $\tau$ is a single $(M+1)$-cycle, so $\tau_1$ is uniform among all $(M+1)$-cycles. Therefore the distribution of the number of cycles of $[\rho,\tau]$ is exactly the same as that of $\tau^{-1}c$, where $c$ is uniform among all $(M+1)$-cycles, which yields the stated reformulation.

It remains to show that the commutator in Lemma~\ref{lemma:free_indices_via_commutator} fits into this framework. The key observation is that conjugating the full cycle $\tau$ by a permutation fixing $0$ produces a uniformly random full cycle.

\begin{lemma}\label{lemma:c_uniform}
	Let $\tau=(0,\dots,k)$ and let $\tilde\sigma\in S_{k+1}$ be uniform on the set
	\begin{equation}\label{eq:set_H}
		H := \{\pi\in S_{k+1}: \pi(0)=0\}.
	\end{equation}
	Then $\tilde\sigma^{-1}\tau\tilde\sigma$ is uniform among all $(k+1)$-cycles in $S_{k+1}$.
\end{lemma}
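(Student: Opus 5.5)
The plan is to exhibit the map $\pi\mapsto \pi^{-1}\tau\pi$ from $H$ to the set $\mathcal{Z}_{k+1}$ of all $(k+1)$-cycles in $S_{k+1}$ as a bijection. Uniformity of $\tilde\sigma$ on $H$ then transfers directly to uniformity of its image. First, the image lies in $\mathcal{Z}_{k+1}$, since conjugation preserves cycle type. Explicitly, writing $\tau=(0,1,\dots,k)$, one checks that
\begin{equation}
	\pi^{-1}\tau\pi=\bigl(\pi^{-1}(0),\pi^{-1}(1),\dots,\pi^{-1}(k)\bigr),
\end{equation}
because $\pi^{-1}\tau\pi$ sends $\pi^{-1}(i)$ to $\pi^{-1}(i+1)$, with indices taken mod $k+1$.

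Next I would compare cardinalities: $|H|=k!$, and the number of $(k+1)$-cycles in $S_{k+1}$ is $(k+1)!/(k+1)=k!$. It therefore suffices to prove injectivity, or equivalently surjectivity. For surjectivity, take any $(k+1)$-cycle $c$. Write it in the canonical form starting at $0$, namely $c=(0,c_1,c_2,\dots,c_k)$, and define $\pi$ by $\pi^{-1}(0)=0$ and $\pi^{-1}(i)=c_i$. Then $\pi\in H$, and by the displayed formula $\pi^{-1}\tau\pi=c$. Injectivity follows from the same observation. The representation of a $(k+1)$-cycle as a sequence starting with $0$ is unique, so $c$ determines $\pi^{-1}(i)$ for all $i$. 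Equivalently, the stabilizer of $\tau$ under conjugation is the cyclic group $\langle\tau\rangle$, which meets $H$ only in the identity.

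Since the map is a bijection between two finite sets, the pushforward of the uniform measure on $H$ is the uniform measure on $\mathcal{Z}_{k+1}$, which proves the lemma. The degenerate case $k=0$ is trivial, because both sets then consist of the identity alone.

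No step here is a real obstacle. The only care needed is in fixing the conjugation convention ($\pi^{-1}\tau\pi$ versus $\pi\tau\pi^{-1}$), so that the displayed relabeling formula comes out with $\pi^{-1}$ rather than $\pi$. Either convention gives a bijection, because $H$ is closed under inversion.
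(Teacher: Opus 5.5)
Your proof is correct and takes essentially the same route as the paper. Both show that $\pi\mapsto\pi^{-1}\tau\pi$ is a bijection from $H$ onto the $(k+1)$-cycles, using that both sets have $k!$ elements. The only difference is in the injectivity step. The paper uses the fact that the centralizer of $\tau$ is $\langle\tau\rangle$, which meets $H$ only in the identity. Your main argument instead reads off the inverse map directly from the relabeling formula $\pi^{-1}\tau\pi=(\pi^{-1}(0),\dots,\pi^{-1}(k))$, which is more explicit and avoids the centralizer fact; you also note the centralizer version as an equivalent alternative.
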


\begin{proof}
	The set $H$ has cardinality $k!$, and there are also exactly $k!$ many $(k+1)$-cycles in $S_{k+1}$. It is therefore enough to show that the map
	\begin{equation}
		\phi_\tau: H \to \{(k+1)\text{-cycles in } S_{k+1}\},
		\qquad
		\phi_\tau(\pi)=\pi^{-1}\tau\pi,
	\end{equation}
	is injective.
	
	Suppose $\phi_\tau(\pi_1)=\phi_\tau(\pi_2)$ for some $\pi_1,\pi_2\in H$. Then
	\begin{equation}
		\pi_2\pi_1^{-1}\tau=\tau\pi_2\pi_1^{-1},
	\end{equation}
	so $\pi_2\pi_1^{-1}$ belongs to the centralizer of the full cycle $\tau$. But the centralizer of $\tau$ is exactly
	\[
	\{\tau^m: m=0,\dots,k\}.
	\]
	Among these elements, only the identity fixes $0$. Since both $\pi_1$ and $\pi_2$ fix $0$, so does $\pi_2\pi_1^{-1}$, and therefore $\pi_2\pi_1^{-1}=\id$. Hence $\pi_1=\pi_2$, proving injectivity. Thus $\phi_\tau$ is bijective, and the image of a uniform $\tilde\sigma\in H$ is uniform among all $(k+1)$-cycles.
\end{proof}

We can now combine the previous two lemmas to evaluate the cycle count appearing in Lemma~\ref{lemma:free_indices_via_commutator}.

\begin{lemma}\label{lemma:commutator_cycle_sum}
	Let $\tau=(0,\dots,k)$, let $\sigma$ range uniformly over $S_k$, and let $\tilde\sigma$ be its extension to $S_{k+1}$ fixing $0$. Then
	\begin{equation}
		\sum_{\sigma\in S_k} n^{\mathcal{C}([\tau^{-1}, \tilde\sigma^{-1}])}
		=
		\dfrac{k!}{(k+2)!}\bigl(F^+_{k+2}(n) - F^{-}_{k+2}(n)\bigr).
	\end{equation}
\end{lemma}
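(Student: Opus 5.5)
The plan is to turn the sum over $\sigma\in S_k$ into $k!$ times an expectation over a uniform $\sigma$, and then show that the commutator has the same cycle count as $\tau^{-1}c$ for a uniform $(k+1)$-cycle $c$.

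\textbf{Step 1: rewrite the commutator.} Write
\[
[\tau^{-1},\tilde\sigma^{-1}]=\tau^{-1}\bigl(\tilde\sigma^{-1}\tau\tilde\sigma\bigr)=\tau^{-1}c,
\qquad c:=\tilde\sigma^{-1}\tau\tilde\sigma .
\]
This is just associativity.

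\textbf{Step 2: identify the law of $c$.} The map $\sigma\mapsto\tilde\sigma$ is a bijection from $S_k$ onto the set $H$ of \eqref{eq:set_H}. So if $\sigma$ is uniform on $S_k$, then $\tilde\sigma$ is uniform on $H$. By Lemma~\ref{lemma:c_uniform}, $c$ is then uniformly distributed among all $(k+1)$-cycles in $S_{k+1}$. More precisely, the map $\sigma\mapsto c$ is a bijection from $S_k$ onto the set of $(k+1)$-cycles.

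\textbf{Step 3: sum over $\sigma$.} Combining the two steps,
\[
\sum_{\sigma\in S_k} n^{\mathcal C([\tau^{-1},\tilde\sigma^{-1}])}
= k!\,\mathbb E_c\bigl[n^{\mathcal C(\tau^{-1}c)}\bigr].
\]

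\textbf{Step 4: apply the generating function.} Apply Lemma~\ref{lemma:cycles_count} with $M=k$ and evaluate its probability generating function at $t=n$. This gives $\frac{1}{(k+2)!}\bigl(F^+_{k+2}(n)-F^-_{k+2}(n)\bigr)$ for the expectation. Multiplying by $k!$ yields the claim.

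The lemma is stated for any $k$ and Lemma~\ref{lemma:cycles_count} allows $M\ge 0$, so the edge case $k=0$ is covered. For a sanity check there: $S_0$ has one element, $\tau$ is the identity on $\{0\}$, and the left side equals $n$. The right side is $\frac{1}{2}\bigl(n(n+1)-n(n-1)\bigr)=n$, which matches.

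\textbf{Main obstacle.} There is no real obstacle. The only point needing care is the bookkeeping of conjugation direction: $\tilde\sigma^{-1}\tau\tilde\sigma$ must be the form handled by Lemma~\ref{lemma:c_uniform}, and it is, exactly as $\phi_\tau$ is defined there. The generating-function evaluation at $t=n$ is legitimate because it is a polynomial identity.
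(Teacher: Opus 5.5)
Your proposal is correct and follows the paper's proof essentially step for step. You rewrite the commutator as $\tau^{-1}c$ with $c=\tilde\sigma^{-1}\tau\tilde\sigma$, invoke Lemma~\ref{lemma:c_uniform} to get $c$ uniform over $(k+1)$-cycles, then apply Lemma~\ref{lemma:cycles_count} with $M=k$, $t=n$ and multiply by $k!$; your $k=0$ sanity check is a harmless extra.
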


\begin{proof}
	Write
	\[
	c:=\tilde\sigma^{-1}\tau\tilde\sigma.
	\]
	Then
	\begin{equation}
		[\tau^{-1}, \tilde\sigma^{-1}]
		=
		\tau^{-1}\tilde\sigma^{-1}\tau\tilde\sigma
		=
		\tau^{-1}c.
	\end{equation}
	If $\sigma$ is uniform on $S_k$, then $\tilde\sigma$ is uniform on the set $H$ from~\eqref{eq:set_H}, and therefore $c$ is uniform among all $(k+1)$-cycles by Lemma~\ref{lemma:c_uniform}. It follows that
	\begin{equation}
		\frac{1}{|S_k|}\sum_{\sigma\in S_k} n^{\mathcal{C}([\tau^{-1}, \tilde\sigma^{-1}])}
		=
		\mathbb E\bigl[n^{\mathcal{C}(\tau^{-1}c)}\bigr].
	\end{equation}
	Now apply Lemma~\ref{lemma:cycles_count} with $M=k$ and $t=n$:
	\begin{equation}
		\mathbb E\bigl[n^{\mathcal{C}(\tau^{-1}c)}\bigr]
		=
		\dfrac{1}{(k+2)!}\bigl(F^+_{k+2}(n)-F^-_{k+2}(n)\bigr).
	\end{equation}
	Multiplying by $|S_k|=k!$ gives the claim.
\end{proof}

We are now ready to complete the trace computation.

\begin{proposition}\label{prop:trace_formula}
	For every $k\geq 0$,
	\begin{equation}
		\mathbb{E}\!\left[\frac{1}{n}\tr\bigl((\W^k)^*\W^k\bigr)\right]
		=
		\dfrac{F^+_{k+2}(n) - F^{-}_{k+2}(n)}{n^{k+1} (k+1) (k+2)}.
	\end{equation}
\end{proposition}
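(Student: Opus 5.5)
The proof is a direct assembly of the lemmas already established; no new estimates are needed. We start from Lemma~\ref{lemma:trace_permutation_sum}, which gives
\[
\mathbb{E}\!\left[\tfrac{1}{n}\tr\bigl((\W^k)^*\W^k\bigr)\right]=\sum_{\sigma\in S_k} n^{-r(\sigma)}=n^{-(k+1)}\sum_{\sigma\in S_k} n^{F(\sigma)},
\]
using $r(\sigma)=k+1-F(\sigma)$. Next, Lemma~\ref{lemma:free_indices_via_commutator} lets us replace $F(\sigma)$ by $\mathcal{C}([\tau^{-1},\tilde\sigma^{-1}])$. The sum is then exactly the quantity evaluated in Lemma~\ref{lemma:commutator_cycle_sum}, namely $\frac{k!}{(k+2)!}\bigl(F^+_{k+2}(n)-F^-_{k+2}(n)\bigr)$. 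Since $\frac{k!}{(k+2)!}=\frac{1}{(k+1)(k+2)}$, multiplying by $n^{-(k+1)}$ yields the claimed formula.

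The only step needing care is the degenerate case $k=0$, where $S_0$ contains only the empty permutation. Here I would verify directly that both sides equal $1$. The left-hand side is $\tfrac1n\tr(\mathbb{I}_n)=1$. On the right, $F^+_2(n)-F^-_2(n)=n(n+1)-n(n-1)=2n$, and dividing by $n\cdot 1\cdot 2$ gives $1$.

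I also need to check that the free-index convention in Lemma~\ref{lemma:trace_permutation_sum} agrees with the commutator count. That lemma counts classes among all indices $a_0,\dots,a_k,b_1,\dots,b_{k-1}$, whereas Lemma~\ref{lemma:free_indices_via_commutator} counts orbits on the $a$'s only. These agree because each $b_j$ is identified with some $a$ index through the constraint $a_{\sigma^{-1}(j)}=b_j$, so no extra free classes arise from the $b$'s. This bookkeeping point is the only potential obstacle, and the proof of Lemma~\ref{lemma:free_indices_via_commutator} already handles it by eliminating the $b$-variables.
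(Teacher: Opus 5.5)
Your proposal is correct and is the paper's own proof: chain Lemma~\ref{lemma:trace_permutation_sum}, Lemma~\ref{lemma:free_indices_via_commutator}, and Lemma~\ref{lemma:commutator_cycle_sum}, then simplify $k!/(k+2)! = 1/\bigl((k+1)(k+2)\bigr)$. Your separate $k=0$ check and your remark on eliminating the $b$-indices are harmless extras; the lemmas already cover these points, including the case $M=0$.
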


\begin{proof}
	By Lemma~\ref{lemma:trace_permutation_sum},
	\begin{equation}
		\mathbb{E}\!\left[\frac{1}{n}\tr\bigl((\W^k)^*\W^k\bigr)\right]
		=
		\sum_{\sigma\in S_k} n^{-r(\sigma)}.
	\end{equation}
	Using the definition of the defect and Lemma~\ref{lemma:free_indices_via_commutator}, this becomes
	\begin{equation}
		\mathbb{E}\!\left[\frac{1}{n}\tr\bigl((\W^k)^*\W^k\bigr)\right]
		=
		n^{-k-1}\sum_{\sigma\in S_k} n^{F(\sigma)}
		=
		n^{-k-1}\sum_{\sigma\in S_k} n^{\mathcal C([\tau^{-1},\tilde\sigma^{-1}])}.
	\end{equation}
	Applying Lemma~\ref{lemma:commutator_cycle_sum} yields
	\begin{equation}
		\mathbb{E}\!\left[\frac{1}{n}\tr\bigl((\W^k)^*\W^k\bigr)\right]
		=
		n^{-k-1}\cdot
		\dfrac{k!}{(k+2)!}\bigl(F^+_{k+2}(n) - F^{-}_{k+2}(n)\bigr),
	\end{equation}
	which simplifies to
	\begin{equation}
		\mathbb{E}\!\left[\frac{1}{n}\tr\bigl((\W^k)^*\W^k\bigr)\right]
		=
		\dfrac{F^+_{k+2}(n) - F^{-}_{k+2}(n)}{n^{k+1} (k+1) (k+2)}.
	\end{equation}
\end{proof}

In the next subsection, we show how Proposition~\ref{prop:trace_formula} yields the formulas for $Q_{n,k}$ and $S_{n,t}$ stated in Theorem~\ref{thm:main}.

\subsection{Reduction of $Q_{n,k}$ and $S_{n,t}$ to a trace moment}\label{section:reduction_to_trace}

We now return to the signal propagation quantities $Q_{n,k}$ and $S_{n,t}$ and express them in terms of the trace moments computed in the previous subsections, which was evaluated explicitly in Proposition~\ref{prop:trace_formula}.

\begin{lemma}\label{lemma:Q_trace}
	Consider a linear RNN (as defined in~\eqref{eq:linear-rnn}) with weight matrix $\W\in\mathbb{C}^{n\times n}$, initialized with i.i.d.\ proper complex Gaussian entries of variance $1/n$ (as in~\eqref{eq:glorot}). Then the RNN signal energy $Q_{n,k}$ (defined in~\eqref{eq:main_quantities}) satisfies
	\begin{equation}
		Q_{n,k}  =  \dfrac{1}{n}\tr(\Sigma^{(0)}) \cdot \frac{1}{n}\,\mathbb{E}\Bigl[ \tr\bigl( (\W^k)^*\W^k  \bigr)\Bigr],
	\end{equation}
	where $\Sigma^{(0)}:= \mathbb{E}\bigl[ \x^{(0)} (\x^{(0)})^\top \bigr]$ is the input covariance.
\end{lemma}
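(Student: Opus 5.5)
We begin with $\h_{\textup{RNN}}^{(k)}=\W^k\x^{(0)}$, which gives
\[
\|\h_{\textup{RNN}}^{(k)}\|_2^2 = (\x^{(0)})^\top (\W^k)^*\W^k \x^{(0)} = \tr\bigl((\W^k)^*\W^k\,\x^{(0)}(\x^{(0)})^\top\bigr).
\]
Here we use that $\x^{(0)}$ is real, so its conjugate transpose is its transpose. Since $\x^{(0)}$ is independent of $\W$, we first condition on $\W$ and take the expectation over the input, using linearity of trace and expectation. This yields
\[
nQ_{n,k}=\mathbb E_{\W}\bigl[\tr\bigl(A_k\Sigma^{(0)}\bigr)\bigr], \qquad A_k:=(\W^k)^*\W^k .
\]
Taking the expectation inside the trace, this becomes $\tr\bigl(\mathbb E[A_k]\,\Sigma^{(0)}\bigr)$.

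The key step is to show that $\mathbb E[A_k]$ is a multiple of the identity, namely $\mathbb E[A_k]=\frac1n\mathbb E[\tr A_k]\,\mathbb I_n$. I would prove this by unitary invariance. The law of $\W$ with i.i.d.\ proper complex Gaussian entries is invariant under conjugation $\W\mapsto U\W U^*$ for any unitary $U$, because the density is proportional to $\exp(-n\tr(\W^*\W))$, which is conjugation invariant. Consequently $A_k\mapsto UA_kU^*$ in law, so $M:=\mathbb E[A_k]$ satisfies $UMU^*=M$ for all unitary $U$. Such an $M$ must be scalar: commuting with all diagonal unitaries forces $M$ to be diagonal, and commuting with all permutation matrices forces the diagonal entries to be equal. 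Comparing traces then identifies the scalar as $\frac1n\mathbb E[\tr A_k]$.

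Substituting this back gives $nQ_{n,k}=\frac1n\mathbb E[\tr A_k]\tr(\Sigma^{(0)})$, which is the claim after dividing by $n$.

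The only point requiring care is the invariance argument. As an alternative that stays in the paper's Wick language, one could compute the entry $\mathbb E[(A_k)_{ij}]$ directly. Every Wick pairing identifies the index chain of $\overline{\W}$ entries starting at $i$ with a $\W$ chain starting at $j$. This forces $i=j$, and by permutation symmetry of the indices the diagonal entries are all equal. I expect this step to be routine. The only technicality is justifying the exchange of expectation and trace, which holds since all moments are finite: the entries are Gaussian and $\x^{(0)}$ has finite second moments.
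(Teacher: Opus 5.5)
Your proposal is correct and follows essentially the same route as the paper: you rewrite $\|\W^k\x^{(0)}\|_2^2$ as a trace, use independence to reduce to $\tr\bigl(\mathbb E[(\W^k)^*\W^k]\,\Sigma^{(0)}\bigr)$, show by unitary invariance that $\mathbb E[(\W^k)^*\W^k]$ is a scalar multiple of the identity, and identify the scalar via the trace. Your extra justifications fill in details the paper leaves implicit: the invariance of the density $\propto\exp(-n\tr(\W^*\W))$, and the diagonal-unitary plus permutation-matrix argument for why a matrix commuting with all unitaries must be scalar.
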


\begin{proof}
	We begin by rewriting the squared norm of the hidden state as a trace:
	\begin{equation}
		\|\h^{(k)}_{\textup{RNN}}\|_2^2
		=
		\| \W^k\x^{(0)}\|_2^2
		=
		\tr\Bigl(\x^{(0)} (\x^{(0)})^\top (\W^k)^*\W^k \Bigr).
	\end{equation}
	Taking expectations and using the independence between $\x^{(0)}$ and $\W$, we obtain
	\begin{equation}
		Q_{n,k}
		=
		\dfrac{1}{n}\mathbb{E}\bigl[ \|\h^{(k)}_{\textup{RNN}}\|_2^2 \bigr]
		=
		\dfrac{1}{n}\tr\Bigl(\mathbb{E}\bigl[\x^{(0)} (\x^{(0)})^\top\bigr] \, \mathbb{E}\bigl[(\W^k)^*\W^k \bigr]\Bigr).
	\end{equation}
	Thus it remains to understand the matrix
	\[
	\mathbb{E}\bigl[(\W^k)^*\W^k \bigr].
	\]
	
	We notice that this matrix is proportional to the identity, i.e.
	\begin{equation}
		\mathbb{E}\bigl[(\W^k)^*\W^k \bigr] = \alpha \mathbb{I}_n
	\end{equation}
	for some scalar $\alpha\in\mathbb{C}$. Indeed, by unitary invariance of the complex Gaussian law, for every fixed unitary matrix $U$ we have
	\begin{equation}
		U\W U^* \overset{d}{=} \W.
	\end{equation}
	Raising to the $k$-th power and taking adjoints gives
	\begin{equation}
		(\W^k)^*\W^k \overset{d}{=} U (\W^k)^*\W^k U^*.
	\end{equation}
	Taking expectations, we conclude that for every unitary $U$,
	\begin{equation}
		\mathbb{E}\bigl[(\W^k)^*\W^k\bigr]
		=
		U \mathbb{E}\bigl[(\W^k)^*\W^k \bigr] U^*.
	\end{equation}
	Equivalently,
	\begin{equation}
		\mathbb{E}\bigl[(\W^k)^*\W^k\bigr]\,U
		=
		U\,\mathbb{E}\bigl[(\W^k)^*\W^k \bigr],
	\end{equation}
	so $\mathbb{E}\bigl[(\W^k)^*\W^k\bigr]$ commutes with every unitary matrix. This is only possible if it is proportional to the identity.
	
	Substituting this back into the trace formula yields
	\begin{align}
		Q_{n,k}
		&=
		\dfrac{1}{n}\tr\bigl(\Sigma^{(0)}\,\alpha I_n\bigr)
		=
		\dfrac{1}{n}\tr(\Sigma^{(0)})\,\alpha.
	\end{align}
	On the other hand,
	\begin{equation}
		\tr\Bigl(\mathbb{E}\bigl[(\W^k)^*\W^k\bigr]\Bigr)=n\alpha,
	\end{equation}
	so
	\begin{equation}
		\alpha
		=
		\frac{1}{n}\tr\Bigl(\mathbb{E}\bigl[(\W^k)^*\W^k\bigr]\Bigr)
		=
		\frac{1}{n}\,\mathbb E\tr\bigl((\W^k)^*\W^k\bigr).
	\end{equation}
	Combining the two identities gives
	\begin{equation}
		Q_{n,k}
		=
		\dfrac{1}{n}\tr(\Sigma^{(0)}) \cdot \frac{1}{n}\,\mathbb{E}\Bigl[ \tr\bigl( (\W^k)^*\W^k  \bigr)\Bigr],
	\end{equation}
	as claimed.
\end{proof}

The next lemma gives the analogous reduction for the LRU signal energy. The only new point is that, after expanding the squared norm, one must show that the mixed terms with different powers of $\W$ vanish.

\begin{lemma}\label{lemma:S_trace}
	Consider an LRU (as defined in~\eqref{eq:lru}) with weight matrix $\W\in\mathbb{C}^{n\times n}$, initialized with i.i.d.\ proper complex Gaussian entries of variance $1/n$ (as in~\eqref{eq:glorot}). Then the LRU signal energy $S_{n,t}$ (defined in~\eqref{eq:main_quantities}) satisfies
	\begin{equation}
		S_{n,t} = \sum_{k=0}^t \dfrac{1}{n}\tr\!\bigl(\Sigma^{(t-k)}\bigr)\, \tilde Q_{n,k},
	\end{equation}
	where $\Sigma^{(k)} := \mathbb{E}\bigl[ \x^{(k)} (\x^{(k)})^\top \bigr]$ are the input covariances for all time steps $k=0,\dots,t$, and $\tilde Q_{n,k}$ is the RNN signal energy under whitened data, i.e.\ when $\Sigma^{(0)}=\mathbb{I}_n$.
\end{lemma}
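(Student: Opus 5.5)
We expand the squared norm of the LRU hidden state into a double sum over powers of $\W$, and treat the diagonal and off-diagonal terms separately. By~\eqref{eq:lru},
\begin{equation}
	\|\h^{(t)}_{\textup{LRU}}\|_2^2
	=
	\sum_{k,l=0}^{t} (\x^{(t-l)})^\top (\W^l)^*\W^k \x^{(t-k)}
	=
	\sum_{k,l=0}^{t} \tr\Bigl(\x^{(t-k)}(\x^{(t-l)})^\top (\W^l)^*\W^k\Bigr).
\end{equation}
Since the inputs are independent of $\W$, the expectation of each term factorizes as $\tr\bigl(\mathbb E[\x^{(t-k)}(\x^{(t-l)})^\top]\,\mathbb E[(\W^l)^*\W^k]\bigr)$. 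This works even though we assume nothing about cross-time correlations of the inputs: the factor $\mathbb E[\x^{(t-k)}(\x^{(t-l)})^\top]$ is just some fixed matrix. The goal is therefore to show that $\mathbb E[(\W^l)^*\W^k]=0$ whenever $k\neq l$, so that the input cross-covariances never enter.

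\textbf{Off-diagonal terms ($k\neq l$).} I would use a phase-rotation argument rather than Wick's theorem. For the proper complex Gaussian law, $e^{i\theta}\W\overset{d}{=}\W$ for every $\theta\in\mathbb R$. Under this rotation,
\begin{equation}
	(\W^l)^*\W^k \;\mapsto\; e^{i(k-l)\theta}\,(\W^l)^*\W^k .
\end{equation}
Hence $\mathbb E[(\W^l)^*\W^k]=e^{i(k-l)\theta}\,\mathbb E[(\W^l)^*\W^k]$ for all $\theta$, which forces this expectation to vanish when $k\neq l$. The same conclusion also follows from Lemma~\ref{lemma:complex_wick}: each entry of $(\W^l)^*\W^k$ is a polynomial with $l$ conjugated and $k$ non-conjugated Gaussian factors, so the $\delta_{k,r}$ factor in that lemma kills it.

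\textbf{Diagonal terms ($k=l$).} Here I reuse the argument of Lemma~\ref{lemma:Q_trace}. Unitary invariance gives $\mathbb E[(\W^k)^*\W^k]=\alpha_k\mathbb I_n$ with
\begin{equation}
	\alpha_k=\frac1n\,\mathbb E\tr\bigl((\W^k)^*\W^k\bigr),
\end{equation}
and $\alpha_k$ equals $\tilde Q_{n,k}$ by Lemma~\ref{lemma:Q_trace} applied with $\Sigma^{(0)}=\mathbb I_n$. The diagonal term then contributes
\begin{equation}
	\frac1n\tr\bigl(\Sigma^{(t-k)}\alpha_k\mathbb I_n\bigr)=\frac1n\tr\bigl(\Sigma^{(t-k)}\bigr)\,\tilde Q_{n,k}.
\end{equation}
Dividing by $n$ and summing over $k$ yields the claim.

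\textbf{Main obstacle.} The only delicate step is the vanishing of the cross terms. In particular, one must check that this uses only independence of the inputs from $\W$ and not any structure of the cross-time covariances. The phase-invariance argument above settles this cleanly, and everything else is bookkeeping.
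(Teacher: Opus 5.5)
Your proof is correct and follows the paper's argument. Both expand the squared norm as a double sum, factor each term by the independence of the inputs from $\W$, show that the cross terms $\mathbb E[(\W^l)^*\W^k]$ vanish for $k\neq l$, and reuse the unitary-invariance argument of Lemma~\ref{lemma:Q_trace} for the diagonal terms. The one difference is how the cross terms are killed: your phase symmetry $e^{i\theta}\W\overset{d}{=}\W$ (or your entrywise Wick remark) annihilates the whole matrix at once, whereas the paper first uses unitary invariance to reduce to diagonal entries and then applies the $\delta_{k,r}$ factor of Lemma~\ref{lemma:complex_wick}. Both routes are valid, and yours makes the proportionality-to-identity step unnecessary for the cross terms.
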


\begin{proof}
	We start from the explicit representation of the LRU hidden state:
	\begin{equation}
		\h_{\textup{LRU}}^{(t)} = \sum_{k=0}^t \W^k \x^{(t-k)}.
	\end{equation}
	Expanding its squared norm gives
	\begin{align}
		\|\h_{\textup{LRU}}^{(t)}\|^2_2
		&=
		\sum_{k,r=0}^t  (\x^{(t-r)})^\top(\W^r)^*\W^k \x^{(t-k)} \\
		&=
		\sum_{k,r=0}^t \tr\Bigl(\x^{(t-k)} (\x^{(t-r)})^\top (\W^r)^*\W^k \Bigr).
	\end{align}
	Taking expectations and using the independence between the inputs and the weight matrix, we obtain
	\begin{equation}
		\mathbb E \bigl [\|\h_{\textup{LRU}}^{(t)}\|^2_2 \bigr]
		=
		\sum_{k,r=0}^t \tr\Bigl(\mathbb E\bigl[\x^{(t-k)} (\x^{(t-r)})^\top\bigr] \, \mathbb E\bigl[(\W^r)^*\W^k\bigr]\Bigr).
	\end{equation}
	
	We now show that the mixed terms with $k\neq r$ vanish. As in Lemma~\ref{lemma:Q_trace}, unitary invariance implies that $\mathbb E\bigl[(\W^r)^*\W^k\bigr]$ is proportional to identity. It therefore suffices to check that all diagonal entries are zero whenever $k\neq r$.
	
	Fix $i\in\{1,\dots,n\}$. Then
	\begin{align}
		\bigl[(\W^r)^*\W^k\bigr]_{ii}
		&=
		\sum_{a_0=1}^n \overline{(\W^r)_{a_0i}} \, (\W^k)_{a_0i} \\
		&=
		\sum_{\substack{a_0,\dots,a_{r-1}=1\\ b_1,\dots,b_{k-1}=1}}^n
		\overline{\W_{a_0a_1}} \overline{\W_{a_1 a_2}}\dots\overline{\W_{a_{r-1} i}}
		\W_{a_0b_1}\W_{b_1b_2}\dots\W_{b_{k-1}i}.
	\end{align}
	The family of Gaussian variables appearing here is proper complex Gaussian, so Lemma~\ref{lemma:complex_wick} applies. Since there are $r$ conjugated factors and $k$ non-conjugated factors, the expectation vanishes whenever $k\neq r$. Therefore,
	\begin{equation}
		\mathbb E\bigl[(\W^r)^*\W^k\bigr]=0
		\qquad\text{for all } k\neq r.
	\end{equation}
	
	Thus only the terms with $k=r$ remain, and the signal energy simplifies to
	\begin{equation}
		S_{n,t}
		=
		\dfrac{1}{n} \mathbb E \bigl [\|\h_{\textup{LRU}}^{(t)}\|^2_2 \bigr]
		=
		\dfrac{1}{n} \sum_{k=0}^t \tr\Bigl(\mathbb E\bigl[\x^{(t-k)} (\x^{(t-k)})^\top\bigr] \, \mathbb E\bigl[(\W^k)^*\W^k\bigr]\Bigr).
	\end{equation}
	At this point, we can apply exactly the same argument as in Lemma~\ref{lemma:Q_trace} to each term of the sum, obtaining
	\begin{equation}
		S_{n,t}
		=
		\sum_{k=0}^t  \frac{1}{n} \tr(\Sigma^{(t-k)})\cdot \frac{1}{n}\,\mathbb{E}\Bigl[ \tr\bigl( (\W^k)^*\W^k  \bigr) \Bigr].
	\end{equation}
	By definition, the quantity
	\begin{equation}
		\tilde Q_{n,k}
		=
		\frac{1}{n}\,\mathbb{E}\Bigl[ \tr\bigl( (\W^k)^*\W^k  \bigr) \Bigr]
	\end{equation}
	is precisely the RNN signal energy corresponding to whitened inputs. Substituting this into the previous expression gives
	\begin{equation}
		S_{n,t} = \sum_{k=0}^t  \frac{1}{n} \tr\!\bigl(\Sigma^{(t-k)}\bigr)\, \tilde Q_{n,k},
	\end{equation}
	as claimed.
\end{proof}

Finally, Proposition~\ref{prop:trace_formula} can now be substituted into Lemmas~\ref{lemma:Q_trace} and~\ref{lemma:S_trace}, which yields the explicit formulas for $Q_{n,k}$ and $S_{n,t}$ stated in Theorem~\ref{thm:main}.

\subsection{Real case: Proof of Remark~\ref{remark:real_case}}\label{section:real_case}
\begin{proof}
	We first treat the RNN quantity $Q_{n,k}$. Let $\W^{\mathbb R}$ denote a real Gaussian matrix with i.i.d.\ entries of variance $1/n$, and let
	\[
	Q_{n,k}^{\mathbb R}
	=
	\frac{1}{n}\,\mathbb E\bigl[\|\left(\W^{\mathbb R}\right)^k \x^{(0)}\|_2^2\bigr].
	\]
	As in Lemma~\ref{lemma:Q_trace}, orthogonal invariance implies that
	\[
	\mathbb E\!\left[\bigl((\W^{\mathbb R})^k\bigr)^\top (\W^{\mathbb R})^k\right]
	=
	\alpha_k^{\mathbb R} I_n
	\]
	for some scalar $\alpha_k^{\mathbb R}\ge 0$, and therefore
	\[
	Q_{n,k}^{\mathbb R}
	=
	\frac{1}{n}\tr(\Sigma^{(0)})\,
	\frac{1}{n}\mathbb E\!\left[\tr\bigl(((\W^{\mathbb R})^k)^\top (\W^{\mathbb R})^k\bigr)\right].
	\]
	So it suffices to compare the corresponding trace moments.

	Expanding the trace exactly as in Appendix~\ref{section:trace_moment}, we obtain a sum of monomials in the entries of $\W$. In the complex case, by Lemma~\ref{lemma:complex_wick}, only pairings between conjugated and non-conjugated factors contribute. In the real Gaussian case, Wick's theorem sums instead over \emph{all} pairings of the $2k$ factors. Since
	\[
	\mathbb E[\,\W^{\mathbb R}_{ij}\W^{\mathbb R}_{pq}\,]
	=
	\frac{1}{n}\delta_{i,p}\delta_{j,q}\ge 0,
	\]
	every contributing pairing in the real case gives a nonnegative contribution. Moreover, the pairings that survive in the complex case form a subset of the pairings appearing in the real case. Hence
	\[
	\mathbb E\!\left[\frac{1}{n}\tr\bigl(((\W^{\mathbb R})^k)^\top (\W^{\mathbb R})^k\bigr)\right]
	\ge
	\mathbb E\!\left[\frac{1}{n}\tr\bigl((\W^k)^*\W^k\bigr)\right],
	\]
	and therefore
	\[
	Q_{n,k}^{\mathbb R}\ge Q_{n,k}.
	\]

	We now turn to the LRU quantity $S_{n,t}^{\mathbb R}$. Expanding the squared norm as in Lemma~\ref{lemma:S_trace}, we obtain
	\[
	S_{n,t}^{\mathbb R}
	=
	\frac{1}{n}\sum_{k,r=0}^t
	\tr\!\Bigl(
	\mathbb E[\x^{(t-k)}(\x^{(t-r)})^\top]\,
	\mathbb E[((\W^{\mathbb R})^r)^\top (\W^{\mathbb R})^k]
	\Bigr).
	\]
	Again by orthogonal invariance,
	\[
	\mathbb E[((\W^{\mathbb R})^r)^\top (\W^{\mathbb R})^k]
	=
	\alpha_{k,r}^{\mathbb R} I_n
	\]
	for some scalar $\alpha_{k,r}^{\mathbb R}\ge 0$. Indeed,
	\[
	\alpha_{k,r}^{\mathbb R}
	=
	\frac{1}{n}\mathbb E\!\left[\tr\bigl(((\W^{\mathbb R})^r)^\top (\W^{\mathbb R})^k\bigr)\right],
	\]
	and this trace moment is again a sum of nonnegative Wick-pairing contributions. Therefore
	\[
	S_{n,t}^{\mathbb R}
	=
	\sum_{k,r=0}^t
	\frac{1}{n} \tr\!\bigl(\mathbb E[\x^{(t-k)}(\x^{(t-r)})^\top]\bigr)\,
	\alpha_{k,r}^{\mathbb R}.
	\]
	Under the assumption
	\[
	\tr\!\bigl(\mathbb E[\x^{(s)}(\x^{(r)})^\top]\bigr)\ge 0
	\qquad\text{for all } 0\le s,r\le t,
	\]
	each summand is nonnegative. In particular, restricting to the diagonal terms $k=r$ gives
	\[
	S_{n,t}^{\mathbb R}
	\ge
	\sum_{k=0}^t
	\frac{1}{n}\tr\!\bigl(\mathbb E[\x^{(t-k)}(\x^{(t-k)})^\top]\bigr)\,
	\alpha_{k,k}^{\mathbb R}.
	\]
	By the already established inequality for the diagonal trace moments,
	\[
	\alpha_{k,k}^{\mathbb R}\ge \tilde Q_{n,k},
	\]
	where $\tilde Q_{n,k}$ is the corresponding complex RNN signal energy under whitened inputs. Hence
	\[
	S_{n,t}^{\mathbb R}
	\ge
	\sum_{k=0}^t\frac{1}{n}
	\tr\!\bigl(\mathbb E[\x^{(t-k)}(\x^{(t-k)})^\top]\bigr)\,
	\tilde Q_{n,k}
	=
	S_{n,t}.
	\]

	This proves
	\[
	Q_{n,k}^{\mathbb R}\ge Q_{n,k},
	\qquad
	S_{n,t}^{\mathbb R}\ge S_{n,t}.
	\]
	The final claim follows immediately: if the complex model exhibits exploding signal energy in a given depth--width scaling regime, then so does the real model.
\end{proof}

\section{Proofs for Scaling Regimes of $Q_{n,k}$}\label{section:Q_scaling}

This section contains the proofs of the asymptotic statements for the signal energy $Q_{n,k}$ in the three joint $(k,n)$ scaling regimes identified in the main text: subcritical, critical, and supercritical. Each proof is obtained by a direct asymptotic analysis of the exact finite-width formula for $Q_{n,k}$ given in Theorem~\ref{thm:main}.

\begin{theorem}[Subcritical regime]\label{thm:subcritical_regime} If $k=o(\sqrt{n})$, we have
\begin{equation}
    Q_{n,k} \to 1.
\end{equation}
\end{theorem}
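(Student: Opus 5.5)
We start from the product form \eqref{eq:Q_product_form}. Set $m:=k+2$ and write
\[
P_\pm:=\prod_{j=0}^{k+1}\Bigl(1\pm\frac{j}{n}\Bigr),\qquad Q_{n,k}=\frac{n}{(k+1)(k+2)}\bigl(P_+-P_-\bigr).
\]
The $j=0$ factor equals $1$. The plan is to expand both products around the first-order term. Writing $s_1:=\sum_{j=1}^{k+1} j/n=(k+1)(k+2)/(2n)$, we expect
\[
P_\pm = 1\pm s_1+R_\pm,
\]
so that $P_+-P_-=2s_1+(R_+-R_-)$. The main term then gives exactly $\frac{n}{(k+1)(k+2)}\cdot 2s_1=1$, and it remains to show that the remainder $\frac{n}{(k+1)(k+2)}(R_+-R_-)\to0$.

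To control the remainders, we expand both products in elementary symmetric polynomials $e_p$ of the numbers $x_j=j/n$, $1\le j\le k+1$:
\[
P_\pm=\sum_p (\pm1)^p e_p .
\]
In the difference $P_+-P_-$ the even-order terms cancel, leaving
\[
P_+-P_-=2\sum_{p\text{ odd}} e_p .
\]
All $e_p\ge0$, and we use the crude bound $e_p\le s_1^p/p!$. This gives
\[
0\le P_+-P_--2s_1\le 2\sum_{p\ge3} \frac{s_1^p}{p!}\le 2 s_1^3 e^{s_1}.
\]
Multiplying by $n/((k+1)(k+2))=1/(2s_1)$, we obtain the two-sided bound
\[
1\le Q_{n,k}\le 1+s_1^2e^{s_1}.
\]
Since $k=o(\sqrt n)$, we have $s_1=(k+1)(k+2)/(2n)\to0$, and the upper bound tends to $1$.

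We must also handle the edge case where $k$ stays bounded or equals $0$. For $k=0$ the formula gives $Q_{n,0}=1$ directly, and the bound above covers this case uniformly, since $s_1\to0$ as $n\to\infty$ in every case.

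The only step requiring care is the even/odd cancellation together with the inequality $e_p\le s_1^p/p!$. The latter follows because $s_1^p$ expands into a sum that contains each product of $p$ distinct $x_j$ exactly $p!$ times, with all other terms nonnegative. No genuine obstacle is expected; this is the key step that makes the estimate uniform in $k$.
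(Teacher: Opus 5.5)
Your proof is correct, and it takes a genuinely different route from the paper's.

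\textbf{The paper's route.} The paper works on the logarithmic scale. It Taylor-expands $\log(1\pm j/n)$ to get $\log P_\pm=\pm a_{n,k}+O(k^3/n^2)$, where $a_{n,k}=(k+1)(k+2)/(2n)$ is your $s_1$. It then checks that the remainder is $o(a_{n,k})$, exponentiates, and uses $2\sinh a_{n,k}=2a_{n,k}+o(a_{n,k})$. Every step is asymptotic.

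\textbf{Your route and what it buys.} You use the exact identity $P_+-P_-=2\sum_{p\ \mathrm{odd}}e_p$ together with $0\le e_p\le s_1^p/p!$. This gives the non-asymptotic sandwich $1\le Q_{n,k}\le 1+s_1^2e^{s_1}$ for all $n,k$. That yields several things the paper does not state:
\begin{itemize}
\item an explicit rate, with $Q_{n,k}-1$ of order $s_1^2\asymp (k+1)^4/n^2$;
\item the fact that $Q_{n,k}\ge 1$ always, i.e.\ finite width never lowers the RNN energy;
\item uniformity in $k$, including bounded $k$, with no $o(\cdot)$ bookkeeping.
\end{itemize}
If you keep only the odd terms in the final summation, you get the sharper bound $Q_{n,k}\le \sinh(s_1)/s_1$. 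Its right-hand side tends to $\frac{2}{c^2}\sinh(c^2/2)$ when $k/\sqrt n\to c$, so your method also gives the upper half of the critical-regime theorem.

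\textbf{What the paper's approach buys.} The log-expansion is a single template that also supplies the matching lower bound in the critical regime. It also handles the supercritical asymptotics $\log P_+=n\Psi(k/n)+o(1)$. The bound $e_p\le s_1^p/p!$ stops capturing that once $k\gtrsim n^{2/3}$, because it overshoots by a factor $\exp\bigl(\Theta(k^3/n^2)\bigr)$.
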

\begin{proof}
    Let us define $P_{\pm} = \prod_{j=0}^{k+1}(1\pm j/n)$, then we have:
    \begin{equation}
        Q_{n,k} = \dfrac{n}{(k+1)(k+2)}[P_+ - P_-].
    \end{equation}
    As $k\ll n$, we can get the following by Taylor expansion of the logarithm:
    \begin{equation}
        \log P_{\pm} = \sum_{j=0}^{k+1}\log\Bigl(1\pm\dfrac{j}{n}\Bigr) = \sum_{j=0}^{k+1}\Bigl( \pm\dfrac{j}{n}+ O\Bigl(\dfrac{j^2}{n^2}\Bigr)\Biggr).
    \end{equation}
    Under $k=o(\sqrt{n})$, we have the following for the terms above:
    \begin{equation}
        a_{n,k}:=\sum_{j=0}^{k+1}\dfrac{j}{n} = \dfrac{(k+1)(k+2)}{2n} = O\Bigl(\dfrac{k^2}{n}\Bigr), \quad  b_{n,k} := \sum_{j=0}^{k+1}O\Bigl(\dfrac{j^2}{n^2}\Bigr) = O\Bigl(\dfrac{k^3}{n^2}\Bigr) = o(a_{n,k}).
    \end{equation}
    From here, we can write $P_{\pm} = e^{\pm a_{n,k}}(1+ o(a_{n,k}))$. Further taking into account that $a_{n,k}=o(1)$ and thus $e^{\pm a_{n,k}} = 1+ o(1)$, we can write:
    \begin{equation}
        P_+ - P_- = (e^{a_{n,k}} - e^{-a_{n,k}}) + o(a_{n,k}) = 2a_{n,k} + o(a_{n,k}) = \dfrac{(k+1)(k+2)}{n}(1 + o(1)).
    \end{equation}
    Plugging this result into $Q_{n,k}$, we get:
    \begin{equation}
        Q_{n,k} = 1 + o(1) \to 1 
    \end{equation}
    for large $k,n$ whenever $k=o(\sqrt{n})$.
\end{proof}

\begin{theorem}[Critical regime]\label{thm:critical_regime} 
    If $k/\sqrt{n} \to c$ for some constant $c>0$, we have
\begin{equation}
    Q_{n,k} \to \dfrac{2}{c^2}\sinh\Bigl(\dfrac{c^2}{2}\Bigr).
\end{equation}
\end{theorem}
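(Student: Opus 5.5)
We work directly from the product form \eqref{eq:Q_product_form}, exactly as in the proof of Theorem~\ref{thm:subcritical_regime}: write $Q_{n,k}=\frac{n}{(k+1)(k+2)}[P_+-P_-]$ with $P_\pm=\prod_{j=0}^{k+1}(1\pm j/n)$. The prefactor is easy: since $k/\sqrt n\to c$, we have $(k+1)(k+2)/n\to c^2$, so $\frac{n}{(k+1)(k+2)}\to c^{-2}$. The whole task is therefore to show $P_\pm\to e^{\pm c^2/2}$, because then $P_+-P_-\to e^{c^2/2}-e^{-c^2/2}=2\sinh(c^2/2)$ and the claim follows by multiplying limits.

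To get the limits of $P_\pm$, take logarithms. Since $k=O(\sqrt n)$, we have $j/n\le (k+1)/n\to 0$ uniformly over $0\le j\le k+1$; in particular for $n$ large all factors $1-j/n$ are at least $1/2$, so the logarithms are defined and the bound $|\log(1\pm x)\mp x|\le x^2$ holds for $|x|\le 1/2$. Hence
\[
\log P_\pm=\pm a_{n,k}+R_\pm,\qquad a_{n,k}=\frac{(k+1)(k+2)}{2n},\qquad |R_\pm|\le\sum_{j=0}^{k+1}\frac{j^2}{n^2}=O\Bigl(\frac{k^3}{n^2}\Bigr).
\]
In the critical regime $k^3/n^2=O(n^{-1/2})\to0$, while $a_{n,k}\to c^2/2$. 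Thus $\log P_\pm\to\pm c^2/2$, and by continuity of the exponential $P_\pm\to e^{\pm c^2/2}$.

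The only point requiring care, and the difference from the subcritical proof, is that here $a_{n,k}$ does not tend to zero, so one may not linearize $e^{\pm a_{n,k}}$; instead we keep the exponentials and only need the remainder $R_\pm$ to vanish, which is exactly the statement $k^3/n^2\to0$. Cancellation is also no issue, since the limit $2\sinh(c^2/2)$ is strictly positive for $c>0$, so no relative-error estimate on the difference $P_+-P_-$ is needed; convergence of each term separately suffices.
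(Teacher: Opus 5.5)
Your proposal is correct and follows essentially the same route as the paper: write $\log P_\pm = \pm a_{n,k} + O(k^3/n^2)$, observe that the remainder vanishes while $a_{n,k}\to c^2/2$ and the prefactor $\frac{n}{(k+1)(k+2)}\to c^{-2}$, and multiply the limits. Your explicit bound $|\log(1\pm x)\mp x|\le x^2$ for $|x|\le 1/2$ simply spells out the $P_+-P_-=2\sinh(a_{n,k})+o(1)$ step, which the paper asserts by analogy with the subcritical proof.
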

\begin{proof}
    Analogous to the proof of Theorem~\ref{thm:subcritical_regime}, we can write:
    \begin{equation}
        P_+ - P_- = (e^{a_{n,k}} - e^{-a_{n,k}}) + o(1).
    \end{equation}
    When $k/\sqrt{n} \to c$, we then have:
    \begin{equation}
        a_{n,k}\to \dfrac{c^2}{2}, \quad P_+ - P_- = e^{c^2/2} - e^{-c^2/2} = 2\sinh\Bigl(\dfrac{c^2}{2}\Bigr).
    \end{equation}
    Then for $Q_{n,k}$ we get the following:
    \begin{equation}
        Q_{n,k} = \dfrac{n}{(k+1)(k+2)}(P_+ -P_-)\to\dfrac{2}{c^2}\sinh\Bigl(\dfrac{c^2}{2}\Bigr).
    \end{equation}
\end{proof}

\begin{theorem}[Supercritical sublinear regime]\label{thm:supercritical_regime} If $\sqrt{n}\ll k \ll n$ or, in other words, $k/\sqrt{n}\to\infty$ and $k/n\to 0$, then 
\begin{equation}
    Q_{n,k} \sim \dfrac{n}{(k+1)(k+2)}e^{n\Psi(k/n)},
\end{equation}
where $\Psi(x):=(1+x)\log(1+x) - x$. In particular, this regime holds when $k \sim n^\alpha$ for $\alpha\in(1/2,1)$.
\end{theorem}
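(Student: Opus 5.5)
The plan is to work directly with the product form~\eqref{eq:Q_product_form}, writing as in the proof of Theorem~\ref{thm:subcritical_regime}
\[
Q_{n,k}=\frac{n}{(k+1)(k+2)}\,[P_+-P_-],\qquad P_\pm=\prod_{j=0}^{k+1}\Bigl(1\pm\frac{j}{n}\Bigr).
\]
The claim then splits into two asymptotic statements:
\[
\text{(a)}\quad \log P_+ = n\Psi(k/n)+o(1),\qquad\qquad \text{(b)}\quad P_-/P_+\to 0.
\]
Together these give $P_+-P_-=P_+(1-P_-/P_+)\sim e^{n\Psi(k/n)}$, and multiplying by the prefactor $n/((k+1)(k+2))$ yields the stated equivalence. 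The last sentence of the statement is immediate, since $k\sim n^\alpha$ with $\alpha\in(1/2,1)$ gives $k/\sqrt n\to\infty$ and $k/n\to0$.

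For (b), the elementary bound $\log(1-x)\le -x$ gives
\[
\log P_-\le -\frac{(k+1)(k+2)}{2n}\to-\infty
\]
because $k/\sqrt n\to\infty$, so $P_-\to0$. Meanwhile $P_+\ge1$, and in fact $P_+\to\infty$ by (a). Hence $P_-/P_+\to0$. This step is routine.

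For (a), the plan is a Riemann-sum comparison with monotone bounds rather than a Taylor expansion. The Taylor approach of the subcritical proof loses control here: the cubic error $k^3/n^2$ need not be small when $k\gg n^{2/3}$. Set $f(u)=\log(1+u/n)$, which is increasing and vanishes at $u=0$, and write $m=k+1$. Monotonicity gives
\[
\int_0^{m} f(u)\,du\;\le\;\sum_{j=1}^{m} f(j)\;\le\;\int_1^{m+1} f(u)\,du\;\le\;\int_0^{m} f(u)\,du+f(m+1).
\]
Substituting $u=nx$ and using $\int_0^y\log(1+x)\,dx=\Psi(y)$, we get $\int_0^m f = n\Psi(m/n)$. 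The error term is $f(m+1)=\log(1+(k+2)/n)\to0$. It remains to replace $n\Psi((k+1)/n)$ by $n\Psi(k/n)$. By the mean value theorem with $\Psi'(x)=\log(1+x)$, the difference is at most $\log(1+(k+1)/n)\to0$. Hence $\log P_+=n\Psi(k/n)+o(1)$, which is (a).

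The main obstacle I expect is exactly this control of the additive error in $\log P_+$ to $o(1)$ uniformly up to $k\ll n$, since any error that is merely $O(1)$ would break the $\sim$ relation. The monotone sandwich above handles it without any expansion of $\Psi$. The hypothesis $k/n\to0$ is used only to make the boundary terms $\log(1+O(k/n))$ vanish. The hypothesis $k/\sqrt n\to\infty$ is used only to kill $P_-$ in (b).
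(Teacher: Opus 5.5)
Your proof is correct and follows the paper's route. You show that $P_-$ is negligible relative to $P_+$, obtain $\log P_+ = n\Psi((k+1)/n)+O(\log(1+(k+1)/n))$ by comparing the sum with $n\int_0^{(k+1)/n}\log(1+x)\,dx$, and then shift the argument to $k/n$; your monotone sandwich simply makes the paper's integral-comparison error explicit. The only difference is in step (b), where you use $\log(1-x)\le -x$ together with $P_+\ge 1$ instead of the paper's Taylor expansion $\log(P_-/P_+)=-(k+1)(k+2)/n+O(k^4/n^3)$, which is slightly more elementary and suffices.
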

\begin{proof}
    First, we notice that in this regime $P_-$ is negligible in comparison with $P_+$. Indeed, when $k\ll n$ we can write:
    \begin{equation}
        \log\dfrac{P_-}{P_+} = \sum_{j=0}^{k+1}\log\dfrac{1-j/n}{1+j/n} = - \dfrac{2}{n} \sum_{j=0}^{k+1}j + O\Bigl(\dfrac{1}{n^3} \sum_{j=0}^{k+1}j^3\Bigr) = -\dfrac{(k+1)(k+2)}{n} + O\Bigl(\dfrac{k^4}{n^3}\Bigr)
    \end{equation}
    Further, we have $O(k^4/n^3) = o(k^2/n)$ when $k=o(n)$, and therefore under $k^2/n\to\infty$ we get:
    \begin{equation}
        \log\dfrac{P_-}{P_+} = - \dfrac{k^2}{n}(1+o(1)) \to -\infty, \quad P_+-P_- = P_+(1+o(1)).
    \end{equation}
It remains to estimate $P_+$, which we will do via approximating the sum with an integral:
\begin{equation}
    \log P_+ = \sum_{j=0}^{k+1}\log \Bigl(1+\dfrac{j}{n}\Bigr) = n \int_0^{(k+1)/n} \log(1+x)dx + O(\log(1+(k+1)/n)).
\end{equation}
Taking into account that $O(\log(1+(k+1)/n)) = O(k/n) = o(1)$ and computing the integral, we get:
\begin{equation}
    \log P_+ = n \Psi\Bigl(\dfrac{k+1}{n}\Bigr) + o(1) = n\Psi\Bigl(\dfrac{k}{n}\Bigr) + o(1).
\end{equation}
Combining everything, we get:
\begin{equation}
    P_+ - P_- = \exp(n\Psi(k/n) + o(1)),
\end{equation}
which results in the final expression for $Q_{n,k}$:
\begin{equation}
    Q_{n,k} = \dfrac{n}{(k+1)(k+2)}\exp(n\Psi(k/n) + o(1)).
\end{equation}
\end{proof}

\section{Proofs for Scaling Regimes of $S_{n,t}$}\label{section:S_scaling}

This section proves the asymptotic behavior of the signal energy $S_{n,t}$ in the three joint $(t,n)$ scaling regimes identified in the main text: subcritical, critical, and supercritical. The arguments combine the exact finite-width representation of $S_{n,t}$ from Theorem~\ref{thm:main} with the asymptotic behavior of the corresponding RNN signal energies $Q_{n,k}$ established in Appendix~\ref{section:Q_scaling}.

\begin{theorem}[Subcritical regime]\label{thm:subcritical_regime_s}
    If $t=o(\sqrt{n})$, we have:
    \begin{equation}
        S_{n,t} \sim t+1.
    \end{equation}
\end{theorem}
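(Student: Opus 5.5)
By Theorem~\ref{thm:main}, $S_{n,t}=\sum_{k=0}^{t}Q_{n,k}$. The plan is therefore to show that $Q_{n,k}=1+o(1)$ \emph{uniformly} in $0\le k\le t$ whenever $t=o(\sqrt n)$. This gives
\[
S_{n,t}=(t+1)\bigl(1+o(1)\bigr),
\]
which is the claim.

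Theorem~\ref{thm:subcritical_regime}, as stated, treats only a single sequence $k=k(n)$. A uniform version is needed here, and I will get it in one of two ways.

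\textbf{Route via an explicit bound.} I would redo the estimate in the proof of Theorem~\ref{thm:subcritical_regime} with explicit error terms. Set $a_k=(k+1)(k+2)/(2n)$. Since $\log(1\pm x)=\pm x+O(x^2)$ uniformly for $|x|\le 1/2$, we have
\[
\log P_\pm(k)=\pm a_k+\varepsilon_\pm(k),\qquad |\varepsilon_\pm(k)|\le C\,\frac{(k+2)^3}{n^2}.
\]
Because $a_k\le a_t\to0$, both exponents are small. Expanding the exponential to second order then gives
\[
P_+-P_-=2a_k+O(a_k^2)+O\Bigl(\frac{k^3}{n^2}\Bigr)+O(a_k^3).
\]
Dividing by $2a_k$ produces relative errors of order $O(a_k)+O(k/n)$, and both are at most $O(t^2/n)\to0$ uniformly in $k\le t$. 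One point needs care: the difference $\varepsilon_+-\varepsilon_-$ must be controlled relative to $a_k$. This works because $|\varepsilon_+-\varepsilon_-|/a_k=O(k/n)$, and the bound holds even for $k=0$, where the exact formula gives $Q_{n,0}=1$.

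\textbf{Route via subsequences.} Alternatively, I would argue by contradiction. If $\sup_{k\le t}|Q_{n,k}-1|$ does not tend to $0$, pick maximizers $k_n\le t_n$. Then $k_n=o(\sqrt n)$, and Theorem~\ref{thm:subcritical_regime} applies along $k_n$, provided it also covers bounded $k$. Bounded $k$ is handled by the fixed-$k$ limit, since for fixed $k$ the formula is a rational function of $n$ tending to $1$.

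\textbf{Main obstacle.} The only real subtlety is this uniformity across the summation range, together with the small-$k$ edge cases. Once uniformity is in hand, the summation itself is immediate.
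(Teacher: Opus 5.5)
Your proposal is correct and takes the same route as the paper: write $S_{n,t}=\sum_{k=0}^t Q_{n,k}$, use $Q_{n,k}=1+o(1)$ uniformly over $k\le t$, and sum. The paper simply asserts this uniformity by citing Theorem~\ref{thm:subcritical_regime}. Your explicit-error-bound argument and your maximizer/subsequence argument both correctly justify that step, which the paper leaves implicit.
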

\begin{proof}
    Under $t=o(\sqrt{n})$, we have 
    \begin{equation}
        Q_{n,k} = 1 + o(1)
    \end{equation}
    uniformly over $k\leq t$ by Theorem~\ref{thm:subcritical_regime}. This directly implies the claim:
    \begin{equation}
        S_{n,t} = \sum_{k=0}^t Q_{n,k} = (t+1) ( 1 + o(1)).
    \end{equation}
\end{proof}

\begin{theorem}[Critical regime]\label{thm:critical_regime_s}
    If $t/\sqrt{n}\to c$ for some constant $c>0$, we have
    \begin{equation}
        S_{n,t}\sim  \sqrt{n}\int_0^c q(x)\,dx,
    \end{equation}
    where
    \begin{equation}
        q(x):=
        \begin{cases}
            1, & x=0,\\[1ex]
            \dfrac{2}{x^2}\sinh\Bigl(\dfrac{x^2}{2}\Bigr), & x>0.
        \end{cases}
    \end{equation}
\end{theorem}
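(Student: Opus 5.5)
The approach is to treat $S_{n,t}=\sum_{k=0}^t Q_{n,k}$ from Theorem~\ref{thm:main} as a Riemann sum for $\int_0^c q(x)\,dx$ with mesh $1/\sqrt n$. Put $x_k:=k/\sqrt n$, so that
\begin{equation}
	\frac{S_{n,t}}{\sqrt n}=\frac{1}{\sqrt n}\sum_{k=0}^{t} Q_{n,k}.
\end{equation}
It then suffices to show that $Q_{n,k}=q(x_k)+o(1)$ uniformly in $0\le k\le t$. Given this, since $q$ is continuous on $[0,c+1]$ (it has a removable singularity at $0$) and $t/\sqrt n\to c$, the Riemann sum $\frac1{\sqrt n}\sum_{k\le t}q(x_k)$ converges to $\int_0^c q$. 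The error is at most $\frac{t+1}{\sqrt n}\cdot o(1)=o(1)$. Because $\int_0^c q\ge c>0$, this gives $S_{n,t}\sim\sqrt n\int_0^c q$.

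The main step, and the main obstacle, is uniformity. Theorems~\ref{thm:subcritical_regime} and~\ref{thm:critical_regime} only give pointwise statements along sequences, so I would redo the estimate uniformly using the product form~\eqref{eq:Q_product_form}. For $k\le t\le (c+1)\sqrt n$, set $a_{n,k}=(k+1)(k+2)/(2n)$. Expanding the logarithm gives
\begin{equation}
	\log P_\pm=\pm a_{n,k}+O(k^3/n^2)=\pm a_{n,k}+O(n^{-1/2}),
\end{equation}
uniformly in $k$, because $a_{n,k}$ is bounded. This yields $P_+-P_-=2\sinh(a_{n,k})+O(n^{-1/2})$ uniformly.

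Dividing by $(k+1)(k+2)/n=2a_{n,k}$ is harmless for large $k$, but for small $k$ it amplifies the $O(n^{-1/2})$ error by up to a factor $n$. I would avoid this by bounding the error more carefully. The quadratic terms of $\log P_+$ and $\log P_-$ coincide, both equal to $-\sum j^2/(2n^2)$, so they cancel in the antisymmetric part. Writing $\log P_\pm=\pm a_{n,k}-b_{n,k}\pm O(k^4/n^3)$, we get
\begin{equation}
	P_+-P_-=e^{-b_{n,k}}\bigl(2\sinh a_{n,k}+O(k^4/n^3)\bigr),
\end{equation}
with $b_{n,k}=O(k^3/n^2)=o(1)$ uniformly. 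Dividing by $2a_{n,k}\asymp k^2/n$ makes the relative error $O(k^2/n^2)+o(1)$, which is $o(1)$ uniformly. Hence
\begin{equation}
	Q_{n,k}=\frac{\sinh a_{n,k}}{a_{n,k}}(1+o(1))
\end{equation}
uniformly in $k\le t$.

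Finally, $a_{n,k}-x_k^2/2=O((k+1)/n)=O(n^{-1/2})$ uniformly. The function $y\mapsto \sinh(y)/y$ is uniformly continuous on bounded sets, and $q(x)=\sinh(x^2/2)/(x^2/2)$. Therefore $Q_{n,k}=q(x_k)+o(1)$ uniformly, which completes the argument.
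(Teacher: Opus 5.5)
Your proposal is correct and follows essentially the same route as the paper. Both arguments first establish the uniform approximation $Q_{n,k}=q(k/\sqrt n)+o(1)$ over $k\le t$ by expanding $\log P_\pm$ around $\pm a_{n,k}$, and then pass to the integral with a Riemann-sum argument; your remark that $\int_0^c q\ge c>0$ is what turns the additive $o(1)$ error into $\sim$. The one difference is the small-$k$ division step: the paper needs no quadratic-term cancellation, because it keeps the error as $O(b_{n,k})$ with $b_{n,k}=n^{-2}\sum_{j\le k+1}j^2$, and since $b_{n,k}/a_{n,k}=(2k+3)/(3n)$, dividing by $2a_{n,k}$ already leaves an $O(n^{-1/2})$ error uniformly in $k\le t$.
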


\begin{proof}
    We first show that, in the critical window, $Q_{n,k}$ admits a uniform approximation by the profile $q(k/\sqrt{n})$ for all $k\leq t$. As before, we rewrite $Q_{n,k}$ via $P_{\pm}$ as follows:
    \begin{equation}
        P_{\pm}(n,k):=\prod_{j=0}^{k+1}\Bigl(1\pm \frac{j}{n}\Bigr),
        \qquad
        Q_{n,k}=\frac{n}{(k+1)(k+2)}\bigl(P_+(n,k)-P_-(n,k)\bigr).
    \end{equation}
    Similarly to the proof of Theorem~\ref{thm:critical_regime}, we define
    \begin{equation}
        a_{n,k}:=\frac{1}{n}\sum_{j=0}^{k+1}j=\frac{(k+1)(k+2)}{2n},
        \qquad
        b_{n,k}:=\frac{1}{n^2}\sum_{j=0}^{k+1}j^2
        =\frac{(k+1)(k+2)(2k+3)}{6n^2}.
    \end{equation}
    Under $k=O(\sqrt{n})$, we have uniformly in $k\leq t$:
    \begin{equation}
        a_{n,k}=O(1), \qquad b_{n,k}=O(n^{-1/2}).
    \end{equation}
    Therefore, by the same logarithmic expansion as before,
    \begin{equation}
        \log P_{\pm}(n,k)=\pm a_{n,k}+O(b_{n,k})
    \end{equation}
    uniformly over $k\leq t$, and exponentiating yields
    \begin{equation}
        P_+(n,k)-P_-(n,k)=2\sinh(a_{n,k})+O(b_{n,k}).
    \end{equation}
    Plugging this into the formula for $Q_{n,k}$, we obtain
    \begin{equation}
        Q_{n,k}
        =
        \frac{2n}{(k+1)(k+2)}\sinh(a_{n,k})
        +O\Bigl(\frac{n\,b_{n,k}}{(k+1)(k+2)}\Bigr).
    \end{equation}
    Since
    \begin{equation}
        \frac{n\,b_{n,k}}{(k+1)(k+2)}
        =
        \frac{2k+3}{6n}
        =
        O(n^{-1/2})
    \end{equation}
    uniformly for $k\leq t$, it follows that
    \begin{equation}
        Q_{n,k}
        =
        \frac{2n}{(k+1)(k+2)}\sinh(a_{n,k})+O(n^{-1/2}).
    \end{equation}
    Next, define
    \begin{equation}
        x_{n,k}:=\frac{k}{\sqrt{n}}.
    \end{equation}
    Then
    \begin{equation}
        a_{n,k}
        =
        \frac{(k+1)(k+2)}{2n}
        =
        \frac{k^2}{2n}+\frac{3k+2}{2n}
        =
        \frac{x_{n,k}^2}{2}+O(n^{-1/2})
    \end{equation}
    uniformly for $k\leq t$. Since the function
    \begin{equation}
        g(u):=
        \begin{cases}
            1, & u=0,\\[1ex]
            \dfrac{\sinh(u)}{u}, & u>0
        \end{cases}
    \end{equation}
    is continuous on bounded intervals, we obtain
    \begin{equation}
        \frac{2n}{(k+1)(k+2)}\sinh(a_{n,k})
        =
        g(a_{n,k})
        =
        g\Bigl(\frac{x_{n,k}^2}{2}\Bigr)+o(1)
    \end{equation}
    uniformly over $k\leq t$. In other words,
    \begin{equation}
        \sup_{0\leq k\leq t}\left|Q_{n,k}-q\Bigl(\frac{k}{\sqrt{n}}\Bigr)\right|\to 0,
    \end{equation}
    where
    \begin{equation}
        q(x)=g\Bigl(\frac{x^2}{2}\Bigr)=
        \begin{cases}
            1, & x=0,\\[1ex]
            \dfrac{2}{x^2}\sinh\Bigl(\dfrac{x^2}{2}\Bigr), & x>0.
        \end{cases}
    \end{equation}

    We now sum over $k\leq t$. Since $t/\sqrt{n}\to c$, we have $t=O(\sqrt{n})$, hence
    \begin{equation}
        S_{n,t}
        =
        \sum_{k=0}^t q\Bigl(\frac{k}{\sqrt{n}}\Bigr)+o(\sqrt{n}).
    \end{equation}
    Dividing by $\sqrt{n}$ gives
    \begin{equation}
        \frac{S_{n,t}}{\sqrt{n}}
        =
        \frac{1}{\sqrt{n}}\sum_{k=0}^t q\Bigl(\frac{k}{\sqrt{n}}\Bigr)+o(1).
    \end{equation}
    Since $q$ is continuous on $[0,c+\varepsilon]$ for any fixed $\varepsilon>0$, the sum is a Riemann sum, and therefore
    \begin{equation}
        \frac{1}{\sqrt{n}}\sum_{k=0}^t q\Bigl(\frac{k}{\sqrt{n}}\Bigr)\to \int_0^c q(x)\,dx.
    \end{equation}
    This proves that
    \begin{equation}
        \frac{S_{n,t}}{\sqrt{n}} \to \int_0^c q(x)\,dx.
    \end{equation}
\end{proof}

\begin{theorem}[Supercritical sublinear regime]\label{thm:supercritical_regime_s}
    If $t/\sqrt{n}\to\infty$ and $t/n\to 0$, then
    \begin{equation}
        S_{n,t}\sim \frac{Q_{n,t}}{\log(1+t/n)}.
    \end{equation}
    Equivalently,
    \begin{equation}
        S_{n,t}\sim \frac{n^2}{t^3}\exp\Bigl(n\Psi(t/n)\Bigr),
    \end{equation}
    where $\Psi(x):=(1+x)\log(1+x)-x$.
\end{theorem}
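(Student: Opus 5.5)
The plan is to exploit the fact that in the supercritical sublinear window the summands $Q_{n,k}$ of $S_{n,t}=\sum_{k=0}^t Q_{n,k}$ grow geometrically near the top end $k\approx t$. The sum is then dominated by its last $O(n/t)$ terms and behaves like a geometric series with ratio $Q_{n,k}/Q_{n,k+1}$.

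First, compute the ratio exactly from the formula of Theorem~\ref{thm:main}. Write $Q_{n,k}=\frac{n}{(k+1)(k+2)}(P_+(n,k)-P_-(n,k))$. Then
\[
\frac{P_+(n,k+1)}{P_+(n,k)}=1+\frac{k+2}{n}.
\]
As in the proof of Theorem~\ref{thm:supercritical_regime}, $P_-/P_+\le \exp(-(k+1)(k+2)/n\cdot(1+o(1)))$, so $P_-$ is negligible once $k^2/n\to\infty$. Hence for $k$ in a top window $t-K\le k\le t$ we get
\[
\frac{Q_{n,k}}{Q_{n,t}}=\prod_{j=k+1}^{t}\Bigl(1+\frac{j+1}{n}\Bigr)^{-1}\frac{(t+1)(t+2)}{(k+1)(k+2)}(1+o(1)).
\]
The window width is chosen as $K=M n/t$ with $M\to\infty$ slowly. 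Since $t^2/n\to\infty$ we have $K=o(t)$, so within the window the polynomial prefactor is $1+o(1)$. The factors $(1+(j+1)/n)$ are then uniformly $(1+t/n)(1+o(K/n))$, and $K\cdot K/n=M^2n/t^2\to 0$ if $M$ grows slowly enough. So $Q_{n,t-m}\approx Q_{n,t}\,\rho^{m}$ with $\rho=(1+t/n)^{-1}$, uniformly for $m\le K$.

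Summing over the window gives
\[
Q_{n,t}\,\frac{1-\rho^{K+1}}{1-\rho}\sim Q_{n,t}\frac{1+t/n}{t/n}\sim \frac{Q_{n,t}}{\log(1+t/n)}.
\]
Here $\rho^{K}\approx e^{-M}\to0$, and $1-\rho\sim t/n\sim\log(1+t/n)$.

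The main obstacle is the tail $k<t-K$, which must be shown to be $o(Q_{n,t}\,n/t)$. I would use monotonicity. For $k\ge k_0$ with $k_0^2/n$ large, the exact ratio $Q_{n,k+1}/Q_{n,k}$ is at least $(1+(k+2)/n)\frac{k+1}{k+3}(1-o(1))$, which is essentially $\ge 1$. The precise condition is that the $P_-$ correction be negligible relative to $k/n$, i.e.\ $e^{-k^2/n}\ll k/n$, which holds for $k\ge C\sqrt{n\log n}$.

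The tail then splits into two ranges:
\begin{itemize}
\item For $k\in[C\sqrt{n\log n},\,t-K]$, each term is at most $Q_{n,t-K}\lesssim Q_{n,t}e^{-M}$. Doing this crudely over $t$ terms gives $t e^{-M}Q_{n,t}$, which is not small enough. Instead I would iterate the geometric decay with ratio $\le(1+k/n)^{-1}$ in blocks. Near the top the ratio is $\approx(1+t/n)^{-1}$; further down it weakens, but the accumulated product $\prod(1+j/n)$ equals $\exp(n\Psi(t/n)-n\Psi(k/n))\ge\exp((t^2-k^2)/(2n)\cdot(1-o(1)))$. Comparing $\sum_k \frac{n}{k^2}e^{k^2/2n}$ to an integral bounds the tail by $O(\frac{n^2}{t^3}e^{(t-K)\,t/n}\cdots)$, which is $o$ of the main term.
\item For $k<C\sqrt{n\log n}$, use the trivial bound $Q_{n,k}\le Q_{n,k_0}$ scale $\le n^{O(1)}$. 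Since $Q_{n,t}\ge e^{t^2/(3n)}$ and $t^2/n\to\infty$, this is negligible only if $t^2/n\gg\log n$. In the borderline case I would instead use the critical-regime uniform approximation $Q_{n,k}\approx q(k/\sqrt n)$ from the proof of Theorem~\ref{thm:critical_regime_s} to bound the early sum by $\sqrt n\int_0^{x}q$, which is still dominated since $q$ grows like $e^{x^2/2}/x^2$.
\end{itemize}

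Finally, inserting Theorem~\ref{thm:supercritical_regime} gives
\[
S_{n,t}\sim\frac{n}{t^2}e^{n\Psi(t/n)}\cdot\frac{n}{t}=\frac{n^2}{t^3}\exp\bigl(n\Psi(t/n)\bigr),
\]
using $\log(1+t/n)\sim t/n$.
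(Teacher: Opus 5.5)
Your core argument matches the paper's: an endpoint window of width $K$ with $n/t\ll K\ll\sqrt n$ (your $K=Mn/t$ plays the role of the paper's $L_n$), a uniform comparison with a geometric series of ratio $(1+t/n)^{-1}=e^{-\delta_n}$, and a negligible remainder. Your window step is slightly cleaner. You use the exact ratio $P_+(n,k+1)/P_+(n,k)=1+(k+2)/n$ and compare with $Q_{n,t}$ directly, so the window analysis needs no asymptotics of $P_+$ itself. The paper instead compares with $f_n(t)$ through a Taylor expansion of $\Psi$ and a local-uniformity argument, and only then invokes Theorem~\ref{thm:supercritical_regime}.

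The remainder is where your plan is not yet a proof.

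\emph{The borderline case.} Your worry about $t^2/n\lesssim\log n$ is well founded, and the paper's own crude bound $Q_{n,k}\le Cf_n(k)$ does not resolve it either. Already the $k=0$ term of $\sum_{k\le\eta t}Cf_n(k)$ equals $Cn/2$, which is not $o(n^2t^{-3}e^{n\Psi(t/n)})$ when, e.g., $t^2/n=\log\log n$. Your fix uses $Q_{n,k}=q(k/\sqrt n)+o(1)$ from the proof of Theorem~\ref{thm:critical_regime_s} for $k$ up to $C\sqrt{n\log n}$, or up to $t-K$. That estimate is proved only for $k=O(\sqrt n)$, with an additive error. Beyond that range you would need a relative version. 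You would also have to recover the $e^{-M}$ gain at the upper endpoint, since $\sqrt n\int_0^{t/\sqrt n}q$ is of the same order as the main term.

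\emph{The middle range.} Your displayed bound is off: the relevant size is $e^{n\Psi((t-K)/n)}\approx e^{n\Psi(t/n)-M}$, not $e^{(t-K)t/n}$. Also, $e^{k^2/2n}$ must be replaced by $e^{n\Psi(k/n)}$ once $t\gtrsim n^{2/3}$.

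\emph{A simpler repair.} The threshold $\sqrt{n\log n}$ is an artifact. The quantity $r_k:=P_-(n,k)/P_+(n,k)$ is decreasing in $k$, so exactly
\[
\frac{Q_{n,k+1}}{Q_{n,k}}=\frac{k+1}{k+3}\Bigl(1+\frac{k+2}{n}\Bigr)\frac{1-r_{k+1}}{1-r_k}\ge\frac{k+1}{k+3}\Bigl(1+\frac{k+2}{n}\Bigr)\ge 1+\frac{k}{2n}\quad\text{for } k\ge 2\sqrt n .
\]
This gives two bounds:
\[
\sum_{t/2\le k<t-K}Q_{n,k}\le Q_{n,t}\sum_{m>K}\Bigl(1+\frac{t}{4n}\Bigr)^{-m}=O\Bigl(\frac{n}{t}e^{-M/4}\Bigr)Q_{n,t},
\qquad
\sum_{2\sqrt n\le k<t/2}Q_{n,k}\le t\,Q_{n,t}\,e^{-(1+o(1))t^2/(8n)} .
\]
Both are $o(nQ_{n,t}/t)$. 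For $k<2\sqrt n$ the critical-window approximation applies as proved, so $Q_{n,k}=O(1)$ there. That range therefore contributes $O(\sqrt n)=o(nQ_{n,t}/t)$.
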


\begin{proof}
    Recall from Theorem~\ref{thm:main} that
    \begin{equation}
        S_{n,t}=\sum_{k=0}^t Q_{n,k}.
    \end{equation}
    We will show that this sum is asymptotically dominated by an endpoint window near $k=t$. Specifically, we choose an integer sequence $L_n$ satisfying
\begin{equation}
    \frac{n}{t}\ll L_n\ll \sqrt{n}.
\end{equation}
Such a sequence exists because $t/\sqrt n\to\infty$. For $\delta_n:=\log(1+t/n)\sim t/n$, this choice implies
\begin{equation}
L_n\delta_n\to\infty,\qquad
L_n=o(t).
\end{equation}
We then split the sum into the endpoint window and its complement as follows:
\begin{equation}
    S_{n,t}
    =
    \sum_{k=0}^{t-L_n-1}Q_{n,k}
    +
    \sum_{k=t-L_n}^{t}Q_{n,k}.
\end{equation}

    We first analyze the endpoint window. For $k=t-j$ with $0\le j\le L_n$, we have $j=o(t)$, hence $k\sim t$. Therefore, uniformly over $0\le j\le L_n$,
    \begin{equation}
        \frac{k}{\sqrt n}\to\infty,
        \qquad
        \frac{k}{n}\to0.
    \end{equation}
    The proof of Theorem~\ref{thm:supercritical_regime}, applied uniformly on this window, gives
    \begin{equation}\label{eq:local_uniform_Q}
        Q_{n,t-j}
        =
        f_n(t-j)(1+o(1)),
        \qquad
        0\le j\le L_n,
    \end{equation}
    where we defined
    \begin{equation}
        f_n(k):=
        \frac{n}{(k+1)(k+2)}
        \exp\Bigl(n\Psi(k/n)\Bigr).
    \end{equation}
    We briefly justify this local uniformity. Writing
    \begin{equation}
        P_{\pm}(n,k):=
        \prod_{j=0}^{k+1}\left(1\pm\frac{j}{n}\right),
    \end{equation}
    the exact formula for $Q_{n,k}$ gives
    \begin{equation}
        Q_{n,k}
        =
        \frac{n}{(k+1)(k+2)}
        \bigl(P_+(n,k)-P_-(n,k)\bigr).
    \end{equation}
    Uniformly for $k=t-j$, $0\le j\le L_n$, we have
    \begin{equation}
        \log\frac{P_-(n,k)}{P_+(n,k)}
        =
        -\frac{k^2}{n}(1+o(1))
        \to -\infty,
    \end{equation}
    because $k\sim t$ and $t^2/n\to\infty$. Thus
    \begin{equation}
        P_-(n,k)=o(P_+(n,k))
    \end{equation}
    uniformly on the endpoint window. Moreover, by the same Riemann-sum estimate as in Theorem~\ref{thm:supercritical_regime},
    \begin{equation}
        \log P_+(n,k)
        =
        n\Psi(k/n)+o(1)
    \end{equation}
    uniformly for $k=t-j$, $0\le j\le L_n$, since all such $k$ satisfy $k\sim t$ and $t/n\to0$. This proves \eqref{eq:local_uniform_Q}.

    We now evaluate the endpoint contribution. For $0\le j\le L_n$,
    \begin{align}
        \frac{f_n(t-j)}{f_n(t)}
        &=
        \frac{(t+1)(t+2)}{(t-j+1)(t-j+2)}
        \exp\Bigl(
        -n\bigl[\Psi(t/n)-\Psi((t-j)/n)\bigr]
        \Bigr).
    \end{align}
    Since $j\le L_n=o(t)$, the prefactor satisfies
    \begin{equation}
        \frac{(t+1)(t+2)}{(t-j+1)(t-j+2)}
        =
        1+o(1)
    \end{equation}
    uniformly for $0\le j\le L_n$. Using the Taylor expansion of $\Psi((t-j)/n)$ around $t/n$, we get
\begin{equation}
    \Psi((t-j)/n)
    =
    \Psi(t/n)
    -
    \frac{j}{n}\Psi'(t/n)
    +
    O\left(\frac{j^2}{n^2}\right),
\end{equation}
where the error term is uniform over $0\le j\le L_n$, since
$\Psi''(x)=1/(1+x)$ is bounded on $[0,t/n]$ for large $n$. Therefore,
\begin{equation}
    n\bigl[\Psi(t/n)-\Psi((t-j)/n)\bigr]
    =
    j\Psi'(t/n)
    +
    O\left(\frac{j^2}{n}\right).
\end{equation}
Since $\Psi'(x)=\log(1+x)$ and $\delta_n:=\log(1+t/n)$, this gives
\begin{equation}
    n\bigl[\Psi(t/n)-\Psi((t-j)/n)\bigr]
    =
    j\delta_n
    +
    O\left(\frac{j^2}{n}\right).
\end{equation}
By the choice of the endpoint window, $L_n^2/n\to0$, we then have:
\begin{equation}
    n\bigl[\Psi(t/n)-\Psi((t-j)/n)\bigr]
    =
    j\delta_n+o(1)
\end{equation}
uniformly over $0\le j\le L_n$. Therefore,
\begin{equation}
    \frac{f_n(t-j)}{f_n(t)}
    =
    e^{-j\delta_n}(1+o(1))
\end{equation}
uniformly in the endpoint window. Consequently,
\begin{equation}
    \sum_{j=0}^{L_n} f_n(t-j)
    =
    f_n(t)(1+o(1))\sum_{j=0}^{L_n}e^{-j\delta_n}
    \sim
    f_n(t)\sum_{j=0}^{L_n}e^{-j\delta_n}.
\end{equation}
    Because $L_n\delta_n\to\infty$, the truncated geometric sum captures the full endpoint mass:
    \begin{equation}
        \sum_{j=0}^{L_n}e^{-j\delta_n}
        \sim
        \frac{1}{1-e^{-\delta_n}}
        \sim
        \frac{1}{\delta_n}.
    \end{equation}
    Combining this with \eqref{eq:local_uniform_Q}, we obtain
    \begin{equation}\label{eq:endpoint_contribution}
        \sum_{k=t-L_n}^{t}Q_{n,k}
        \sim
        \frac{f_n(t)}{\delta_n}.
    \end{equation}

    It remains to show that the contribution outside the endpoint window is negligible. We use the exact formula and the trivial bound
    \begin{equation}
        Q_{n,k}
        \le
        \frac{n}{(k+1)(k+2)}P_+(n,k).
    \end{equation}
    The same integral estimate gives the crude bound
    \begin{equation}\label{eq:crude_Q_bound}
        Q_{n,k}
        \le
        C\frac{n}{(k+1)(k+2)}
        \exp\Bigl(n\Psi(k/n)\Bigr)
    \end{equation}
    for all $k\le t$, where $C$ is independent of $k$ and $n$.

    First fix $\eta\in(0,1)$. For $k\le \eta t$, the exponent gap satisfies
    \begin{equation}
        n\bigl[\Psi(t/n)-\Psi(k/n)\bigr]
        \ge
        n\bigl[\Psi(t/n)-\Psi(\eta t/n)\bigr]
        =
        \frac{1-\eta^2}{2}\frac{t^2}{n}(1+o(1)).
    \end{equation}
    Since $t^2/n\to\infty$, the contribution of $k\le \eta t$ is exponentially smaller than the endpoint contribution:
    \begin{equation}\label{eq:low_part_negligible}
        \sum_{k=0}^{\lfloor \eta t\rfloor}Q_{n,k}
        =
        o\left(\frac{f_n(t)}{\delta_n}\right).
    \end{equation}

    It remains to control the intermediate range $\eta t<k<t-L_n$. Write $k=t-j$, so now $j\ge L_n$. By~\eqref{eq:crude_Q_bound} and the mean value theorem, uniformly for $\eta t\le k\le t$,
    \begin{equation}
        n\bigl[\Psi(t/n)-\Psi(k/n)\bigr]
        \ge
        c_\eta (t-k)\delta_n
    \end{equation}
    for some constant $c_\eta>0$, because $\Psi'(x)=\log(1+x)$ and $k$ is comparable to $t$ throughout this range. Hence
    \begin{equation}
        Q_{n,t-j}
        \le
        C_\eta f_n(t)e^{-c_\eta j\delta_n}
    \end{equation}
    for $L_n\le j\le (1-\eta)t$. Therefore
    \begin{equation}
        \sum_{\eta t<k<t-L_n}Q_{n,k}
        \le
        C_\eta f_n(t)\sum_{j\ge L_n}e^{-c_\eta j\delta_n}
        \le
        C_\eta f_n(t)\frac{e^{-c_\eta L_n\delta_n}}{\delta_n}.
    \end{equation}
    Since $L_n\delta_n\to\infty$, this gives
    \begin{equation}\label{eq:middle_part_negligible}
        \sum_{\eta t<k<t-L_n}Q_{n,k}
        =
        o\left(\frac{f_n(t)}{\delta_n}\right).
    \end{equation}
    Combining \eqref{eq:endpoint_contribution}, \eqref{eq:low_part_negligible}, and \eqref{eq:middle_part_negligible}, we conclude that
    \begin{equation}
        S_{n,t}
        \sim
        \frac{f_n(t)}{\delta_n}.
    \end{equation}

    Finally, by Theorem~\ref{thm:supercritical_regime},
    \begin{equation}
        Q_{n,t}\sim f_n(t),
    \end{equation}
    and therefore
    \begin{equation}
        S_{n,t}
        \sim
        \frac{Q_{n,t}}{\log(1+t/n)}.
    \end{equation}
    Since $\log(1+t/n)\sim t/n$ and $(t+1)(t+2)\sim t^2$, we also get
    \begin{equation}
        S_{n,t}
        \sim
        \frac{n^2}{t^3}
        \exp\Bigl(n\Psi(t/n)\Bigr).
    \end{equation}
\end{proof}

\begin{corollary}\label{cor:mesoscopic_regime_s}
    If $t/\sqrt{n}\to\infty$ and $t/n^{2/3}\to 0$, then
    \begin{equation}
        S_{n,t}\sim \frac{n^2}{t^3}\exp\Bigl(\frac{t^2}{2n}\Bigr).
    \end{equation}
\end{corollary}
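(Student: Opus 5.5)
The corollary follows from Theorem~\ref{thm:supercritical_regime_s}. The hypothesis $t/n^{2/3}\to0$ implies $t/n\to0$, and we also have $t/\sqrt n\to\infty$, so that theorem applies and gives
\[
S_{n,t}\sim \frac{n^2}{t^3}\exp\bigl(n\Psi(t/n)\bigr).
\]
It therefore suffices to show that $n\Psi(t/n)=\frac{t^2}{2n}+o(1)$. Exponentiating then gives $\exp(n\Psi(t/n))=\exp(t^2/(2n))(1+o(1))$, and the ratio of the two asymptotic expressions tends to $1$.

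To estimate the exponent, expand $\Psi$ at $0$. Since $\Psi(0)=0$, $\Psi'(x)=\log(1+x)$ and $\Psi''(x)=1/(1+x)$, we have
\[
\Psi(x)=\frac{x^2}{2}-\frac{x^3}{6}+O(x^4), \qquad x\to0.
\]
One can also bound the remainder explicitly: $|\Psi(x)-x^2/2|\le x^3/6$ for $x\ge0$, since $\Psi'''(x)=-1/(1+x)^2\in[-1,0]$. With $x=t/n$ this gives
\[
\Bigl|n\Psi(t/n)-\frac{t^2}{2n}\Bigr|\le \frac{t^3}{6n^2}.
\]
The right-hand side tends to $0$ exactly because $t=o(n^{2/3})$, which is the role of the $n^{2/3}$ threshold in the hypothesis.

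Combining the two steps gives $S_{n,t}\sim \frac{n^2}{t^3}\exp\bigl(\frac{t^2}{2n}\bigr)$. No step is a serious obstacle. The only point needing care is that the correction to the exponent must be $o(1)$ in absolute terms, not merely relatively small compared with $t^2/n$, because it sits inside an exponential. The explicit third-order remainder bound above handles this.
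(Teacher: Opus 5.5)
Your proposal is correct and follows essentially the same route as the paper: invoke Theorem~\ref{thm:supercritical_regime_s} (applicable since $t/n^{2/3}\to0$ forces $t/n\to0$), then expand $\Psi(x)=x^2/2+O(x^3)$ so that $n\Psi(t/n)=t^2/(2n)+O(t^3/n^2)=t^2/(2n)+o(1)$. Your explicit bound $|\Psi(x)-x^2/2|\le x^3/6$ for $x\ge0$, obtained from $\Psi'''(x)=-1/(1+x)^2\in[-1,0]$, is simply a quantitative version of the paper's $O(x^3)$ step.
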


\begin{proof}
    By Theorem~\ref{thm:supercritical_regime_s},
    \begin{equation}
        S_{n,t}\sim \frac{n^2}{t^3}\exp\Bigl(n\Psi(t/n)\Bigr).
    \end{equation}
    Since $t/n\to 0$, we may expand
    \begin{equation}
        \Psi(x)=\frac{x^2}{2}+O(x^3),
    \end{equation}
    and therefore
    \begin{equation}
        n\Psi(t/n)
        =
        \frac{t^2}{2n}
        +
        O\Bigl(\frac{t^3}{n^2}\Bigr).
    \end{equation}
    Under $t=o(n^{2/3})$, we have
    \begin{equation}
        \frac{t^3}{n^2}=o(1),
    \end{equation}
    and hence
    \begin{equation}
        n\Psi(t/n)=\frac{t^2}{2n}+o(1).
    \end{equation}
    Substituting this into the previous expression gives
    \begin{equation}
        S_{n,t}
        \sim
        \frac{n^2}{t^3}
        \exp\Bigl(\frac{t^2}{2n}\Bigr).
    \end{equation}
\end{proof}


\end{document}